
\documentclass{article}

\usepackage{comment}
\usepackage{microtype}
\usepackage{graphicx}
\usepackage{subfigure}
\usepackage{booktabs} 
\usepackage{hyperref}
\usepackage{xspace} 



\usepackage{amsmath,amsfonts,bm}
\usepackage{amssymb,amsthm}
\usepackage{mathtools}
\usepackage{algorithm,algorithmic}
\usepackage{natbib}
\usepackage{xcolor}
\usepackage{url}
\usepackage{cancel}
\usepackage[capitalize]{cleveref}
\usepackage{setspace}
\usepackage{enumitem}

\crefname{proposition}{Proposition}{Propositions}
\crefname{theorem}{Theorem}{Theorems}
\crefname{lemma}{Lemma}{Lemmas}
\crefname{update_rule}{Update}{Updates}
\crefname{algorithm}{Algorithm}{Algorithms}









\def\eqref#1{equation~\ref{#1}}









\def\1{\bm{1}}










\DeclareMathAlphabet{\mathsfit}{\encodingdefault}{\sfdefault}{m}{sl}
\SetMathAlphabet{\mathsfit}{bold}{\encodingdefault}{\sfdefault}{bx}{n}

\def\gA{{\mathcal{A}}}

\def\gF{{\mathcal{F}}}

\def\gP{{\mathcal{P}}}

\def\gS{{\mathcal{S}}}
\def\gT{{\mathcal{T}}}
\def\gU{{\mathcal{U}}}



\def\sP{{\mathbb{P}}}

\def\sR{{\mathbb{R}}}

\def\sV{{\mathbb{V}}}










\newtheorem{theorem}{Theorem}
\newtheorem{lemma}{Lemma}
\newtheorem{definition}{Definition}
\newtheorem{proposition}{Proposition}

\newtheorem{assumption}{Assumption}

\DeclareMathOperator*{\expectation}{\mathbb{E}}


\usepackage[accepted]{icml2021}

\newcommand{\RAAV}{RA2.1-Q\xspace}  
\newcommand{\RAAA}{RA3-Q\xspace}  
\newcommand{\RAA}{RA2-Q\xspace}  
\newcommand{\RAAM}{RAM-Q\xspace}


\begin{document}
\setlength{\abovedisplayskip}{0pt}
\setlength{\belowdisplayskip}{0pt}
\setlength{\abovedisplayshortskip}{0pt}
\setlength{\belowdisplayshortskip}{0pt}

\icmltitlerunning{Robust Risk-Sensitive Reinforcement Learning Agents for Trading Markets}

\twocolumn[
\icmltitle{Robust Risk-Sensitive Reinforcement Learning Agents for Trading Markets}



\icmlsetsymbol{intern}{*}

\begin{icmlauthorlist}
\icmlauthor{Yue Gao}{to,borealis,intern}
\icmlauthor{Kry Yik Chau Lui}{borealis}
\icmlauthor{Pablo Hernandez-Leal}{borealis}
\end{icmlauthorlist}

\icmlaffiliation{to}{Department of Computation, University of Alberta, Canada}
\icmlaffiliation{borealis}{Borealis AI}

\icmlcorrespondingauthor{Yue Gao}{gao12@ualberta.ca}
\icmlcorrespondingauthor{Pablo Hernandez-Leal}{pablo.hernandez@borealisai.com}

\icmlkeywords{Reinforcement learning, risk, multiagent learning}

\vskip 0.3in
]



\printAffiliationsAndNotice{*Work performed as an intern at Borealis AI.}  

\begin{abstract}

Trading markets represent a real-world financial application to deploy reinforcement learning agents, however, they carry hard fundamental challenges such as high variance and costly exploration. Moreover, markets are inherently a multiagent domain composed of many actors taking actions and changing the environment. To tackle these type of scenarios agents need to exhibit certain characteristics such as \emph{risk-awareness}, \emph{robustness to perturbations} and \emph{low learning variance}. We take those as building blocks and propose a family of four algorithms. First, we contribute with two algorithms that use risk-averse objective functions and variance reduction techniques. Then, we augment the framework to multi-agent learning and assume an adversary which can take over and perturb the learning process. Our third and fourth algorithms perform well under this setting and balance theoretical guarantees with practical use. Additionally, we consider the multi-agent nature of the environment and our work is the first one extending empirical game theory analysis for multi-agent learning by considering risk-sensitive payoffs. 
\end{abstract}

\section{Introduction}

Reinforcement learning (RL) has moved from toy domains to real-world applications such as games~\cite{berner2019dota},  navigation~\cite{bellemare2020autonomous}, software engineering~\cite{bagherzadeh2020reinforcement}, industrial design~\cite{mirhoseini2020chip}, and finance~\cite{li2017deep}.  Each of these applications has inherent difficulties which are long-standing fundamental challenges in RL, such as: limited training time, costly exploration and safety considerations, among others. 

In particular, in finance, there are some examples of RL in stochastic control problems such as option pricing~\cite{li2009learning}, market making~\cite{spooner2018market}, and optimal execution~\cite{ning2018double}.
However, the most well-known finance application is algorithmic trading, where the goal is to design algorithms capable of automatically making trading decisions based on a set of mathematical rules computed by a machine~\cite{theate2021application}.

In algorithmic trading the environment represents the market (and the rest of the actors). The agent's task is to take actions related to how and how much to trade, and the objective is usually to maximize profit while considering risk. There are diverse challenges in this setting such as partial observability, a large action space, a hard definition of rewards and learning objectives~\cite{theate2021application}. In our work we focus on two sought properties for learning agents in realistic scenarios: risk assessment and robustness.

Risk assessment is a cornerstone in financial applications. A well-known approach is to consider risk while assessing the performance (profit)\footnote{Even when the usual financial term for profit is \emph{return}, this could be confused with the usual definition of return in RL (cumulative sum of discounted rewards).} of a trading strategy. Here, risk is a quantity related to the variance (or standard deviation) of the profit and it is commonly refereed to as ``volatility". In particular, the Sharpe ratio~\cite{sharpe1994sharpe} considers both the generated profit and the risk (variance) associated with a trading strategy. Note that this objective function (Sharpe ratio) is different from traditional RL where the goal is to optimize the expected return, usually, without considerations of risk. There are existing works that proposed risk-sensitive RL algorithms~\cite{mihatsch2002risk,di2012policy} and variance reduction techniques~\cite{anschel2017averaged}. In a similar spirit our proposed algorithms aim to reduce variance while also having convergence guarantees and improved robustness via adversarial learning.




Deep RL has been shown to be brittle in many scenarios~\cite{henderson2018deep}.
Therefore, improving robustness is essential for deploying agents in realistic scenarios. A line of work has improved robustness of RL agents via adversarial perturbations~\cite{morimoto2005robust,pinto2017robust}. In particular, the framework assumes an adversary (who is also learning) who is allowed to take over control at regular intervals. This approach has shown good experimental results in robotics~\cite{pan2019risk}, and our proposed algorithms extend on this idea while providing convergence guarantees.

Since our motivation is to use RL agents in trading markets (which can be seen as multi-agent interactions) we also evaluate these agents from the perspective of game theory. However, it may be too difficult to analyze in standard game theoretic framework since there is no normal form representation (commonly used to analyze games). Fortunately, empirical game theory~\cite{walsh2002analyzing,wellman2006methods} overcomes this limitation by using the information of several rounds of repeated interactions and assuming a higher level of strategies (agents' policies). These modifications have made possible the analysis of multi-agent interactions in complex scenarios such as markets~\cite{bloembergen2015trading}, and multi-agent games~\cite{tuyls2020bounds}. However, these works have not studied the interactions under risk metrics (such as Sharpe ratio) as we do in this work. 


In summary, we take inspiration from previous works to combine \emph{risk-awareness, variance reduction and robustness} techniques with four different algorithms.
Risk-Averse Averaged Q-Learning (\RAA) and Variance Reduced Risk-Averse Q-Learning (\RAAV) use risk-averse functions and variance reduction techniques. Then, we augment the framework to a multi-agent scenario where we assume an adversary that can perturb the learning process. We propose Risk-Averse Multi-Agent Q-Learning (\RAAM) which is a multi-agent version of adversarial learning with strong assumptions and theoretical guarantees. Risk-Averse Adversarial Averaged Q-Learning (\RAAA) relaxes those assumptions and proposes a more practical algorithm that keeps the multi-agent adversarial component to improve robustness.  Lastly, we present a theoretical result using empirical game theory analysis on games with risk-sensitive payoff.


\section{Preliminaries}

\subsection{Single-Agent Reinforcement Learning}

A Markov Decision Process is defined by a set of states $\mathcal{S}$ describing the possible configurations, a set of actions $\mathcal{A}$ and a set of observations $\mathcal{O}$ for each agent.
A stochastic policy $\pi_{\theta} : \mathcal{O}\times \mathcal{A} \rightarrow [0,1]$ parameterized by $\theta$ produces the next state according to the state transition function $\mathcal{T}: \mathcal{S}\times\mathcal{A}\rightarrow \mathcal{S}$. The agent obtains rewards as a function of the state and agent’s action $r:\mathcal{S}\times\mathcal{A}\rightarrow\mathbb{R}$, and receives a private observation correlated with the state $\mathbf{o} : \mathcal{S}\rightarrow\mathcal{O}$. The initial states are determined by a distribution $d_0 : \mathcal{S}\rightarrow [0,1]^{|\gS|}$. 
\subsection{Multi-Agent Reinforcement Learning}
In RL, each agent $i$ aims to maximize its own total expected return, e.g., for a Markov game with two agents, for a given initial state distribution $d_0$, the discounted returns are respectively :\begin{align}
     J^1(d_0, \pi^1, \pi^2)= \sum_{t=0}^{\infty}\gamma^t\expectation\left[r_t^1 | \pi^1, \pi^2, d_0\right]\\
    J^2(d_0, \pi^1,\pi^2)=\sum_{t=0}^{\infty}\gamma^t\expectation\left[r_t^2 | \pi^1, \pi^2, d_0\right]
\end{align}
where $\gamma$ is a discount factor, $r_t^1, r_t^2,\;t = 1,2,...$ are respectively immediate rewards for agent 1 \& 2. And a Nash equilibrium for Markov game (with two agents) is defined as following \begin{definition}\cite{MultiAgentQLearning98}
  A Nash equilibrium point of game $(J^1, J^2)$ is a pair of strategies $(\pi_*^1, \pi_*^2)$ such that for $\forall s\in\gS$, \begin{align}
      J^1(s, \pi^1_*, \pi^2_*)\geq J^1(s, \pi^1, \pi^2_*)\quad \forall \pi^1\\
      J^2(s, \pi^1_*, \pi^2_*)\geq J^2(s, \pi^1_*, \pi^2)\quad \forall \pi^2
  \end{align}
\end{definition}

\subsubsection{Multi-agent Extension of MDP}
A Markov game for $N$ agents is defined by a set of states $\mathcal{S}$ describing the possible configurations of all agents, a set of actions $\mathcal{A}_1, ..., \mathcal{A}_{N}$ and a set of observations $\mathcal{O}_1, ..., \mathcal{O}_N$ for each agent. To choose actions, each agent $i$ uses a stochastic policy $\pi_{\theta_i} : \mathcal{O}_i\times \mathcal{A}_i \rightarrow [0,1]$ parameterized by $\theta_i$, which produces the next state according to the state transition function $\mathcal{P}: \mathcal{S}\times\mathcal{A}_1\times...\times\mathcal{A}_N\rightarrow \mathcal{S}$. Each agent $i$ obtains rewards as a function of the state and agents' action $r_i:\mathcal{S}\times\mathcal{A}_1\times...\times\mathcal{A}_N\rightarrow\mathbb{R}$, and receives a private observation correlated with the state $\mathbf{o}_i : \mathcal{S}\rightarrow\mathcal{O}_i$. The initial states are determined by a distribution $d_0 : \mathcal{S}\rightarrow [0,1]^{|\gS|}$. In multi-agent Q learning, the Q tables are defined over joint actions for each of the agents. Each agent receives rewards according to its reward function, with transitions dependent on the actions chosen jointly by the set of agents. 

\subsection{Empirical Game Theory}
We analyze the multi-agent behaviours in a trading market using empirical game theory, where a \emph{player} corresponds to an agent, and a \emph{strategy} corresponds to a learning algorithm. Then, in a $p$-player game, players are involved in
a single round strategic interaction. Each player $i$ chooses a strategy $\pi^i$ from a set of $k$ strategy $S^i = \{\pi_1^i, ..., \pi_k^i \}$ and receives a stochastic payoff $R^i (\pi^1, ..., \pi^p ): S^1\times S^2\times...\times S^p\rightarrow \mathbb{R}$. The underlying game that is usually studied is $r^i (\pi^i, ..., \pi^p) = \mathbb{E}[R^i (\pi^1, ..., \pi^p)]$. In general, we denote the payoff of player $i$ as $\mu^i$ and $\mathbf{x}^{-i}$ as the joint strategy of all players except for player $i$.
\begin{definition}
A joint strategy $\mathbf{x} = (x^1, ..., x^p) = (x^i, \mathbf{x}^{-i})$ is a Nash equilibrium if for all $i$ :
\begin{align}
\mathbb{E}_{\bf{\pi}\sim\mathbf{x}}\left[\mu^i (\pi)\right] = \underset{\pi^i} \max~\mathbb{E}_{\pi^{-i}\sim\mathbf{x}^{-i}}\left[\mu^i (\pi^i, \mathbf{\pi}^{-i})\right]
\end{align}
\end{definition}

\begin{definition}
A joint strategy $\mathbf{x} = (x^1, ..., x^p) = (x^i, \mathbf{x}^{-i})$ is an $\epsilon$-Nash equilibrium if for all $i$:
\begin{align}\label{eq:NashEquilibrium}
 \underset{\pi^i} \max~\mathbb{E}_{\pi^{-i}\sim\mathbf{x}^{-i}}\left[\mu^i (\pi^i, \mathbf{\pi}^{-i})\right]-\mathbb{E}_{\bf{\pi}\sim\mathbf{x}}\left[\mu^i (\pi)\right]\le \epsilon
\end{align}
\end{definition}

Evolutionary dynamics have been used to analyze multi-agent interactions. A well-known model is replicator dynamics (RD)~\cite{weibull1997evolutionary} which describes how a population evolves through time under evolutionary pressure (in our analysis, a population is composed by learning algorithms). RD assumes that the reproductive success is determined by interactions and their outcomes. For example, the population of a certain type increases if they have a higher \emph{fitness} (in our case this means the expected return in certain interaction) than the population average; otherwise that population share will decrease.

To view the dominance of different strategies, it is common to plot the directional field of the payoff tables using the replicator dynamics for a number of strategy profiles $\mathbf{x}$ in the simplex strategy space~\cite{tuyls2020bounds}. In \cref{sec:risk_and_robustness_evaluation} we present results in this format evaluating our proposed algorithms.

\section{Related Work}

Our work is mainly situated in the broad area of safe RL~\cite{garcia2015comprehensive}. In particular, a subgroup of works aims to improve robustness of learned policies by assuming two opposing learning processes: one that aims to disturb the most and another one that tries to control the perturbations~\cite{morimoto2005robust}. This approach has been recently adapted to work with neural networks in the context of deep RL~\cite{pinto2017robust}. Moreover, Risk-Averse Robust Adversarial Reinforcement Learning (RARL)~\cite{pan2019risk} extended this idea by combining with Averaged DQN~\cite{anschel2017averaged}, an algorithm that proposes averaging the previous $k$ estimates to stabilize the training process. RARL trains two agents -- protagonist and adversary in parallel, and the goal for those two agents are respectively to maximize/minimize the expected return as well as minimize/maximize the variance of expected return. RARL showed good experimental results, but lacked theoretical guarantees and theoretical insights on the variance reduction and robustness. Multi-agent Q-learning \cite{MultiAgentQLearning98} is useful for finding the optimal strategy when there exists a unique Nash equilibrium in general sum stochastic games, and this approach could also be used in adversarial RL.

\citeauthor{wainwright2019variance}~(2019) proposed a variance reduction Q-learning algorithm (V-QL) which can be seen as a variant of the SVRG algorithm in stochastic optimization~\cite{NIPS2013_ac1dd209}. Given an algorithm that converges to $Q^*$, one of its iterates $\bar{Q}$ could be used as a proxy for $Q^*$, and then recenter the ordinary Q-learning updates by a quantity $-\hat{\gT}_k(\bar{Q}) + \gT(\bar{Q})$, where $\hat{\gT}_k$ is an empirical Bellman operator, $\gT$ is the population Bellman operator, which is not computable, but an unbiased approximation of it could be used instead. This algorithm is shown to be convergent and enjoys minimax optimality up to a logarithmic factor.

Lastly, another group of works proposed the use of risk-averse objective functions~\cite{mihatsch2002risk} with the Q-learning algorithm. Since these ideas are highly related to our proposed algorithms we will describe in greater detail in the next section. 

\subsection{Risk Averse Q Learning}
\label{sec:RAQL}

\citeauthor{shen2014risk}~(2014) proposed a Q learning algorithm that is shown to converge to the optimal of a risk-sensitive objective function, the training scheme is the same as Q learning, except that in each iteration, a utility function is applied to a TD-error (see \cref{alg:Risk_Averse_QLearning} in Appendix).

Since the goal is to optimize the expected return as well as minimizing the variance of the expected return, an expected utility of the return could be used as the objective function instead: 
\begin{align}
\label{eq:Risk_Averse_Objective}
   \tilde{J}_{\pi}= \frac{1}{\beta}\mathbb{E}_{\pi}\left[exp\left(\beta\sum_{t=0}^{\infty}\gamma^t r_t\right)\right].
\end{align}
By a straightforward Taylor expansion,  \cref{eq:Risk_Averse_Objective} yields
\begin{align*}
    \expectation[\sum_{t=0}^{\infty}\gamma^t r_t] + \frac{\beta}{2}\sV ar[\sum_{t=0}^{\infty}\gamma^t r_t] + O(\beta^2)
\end{align*}
where when $\beta<0$ the objective function is risk-averse, when $\beta=0$ the objective function is risk-neutral, and when $\beta>0$ the objective function is risk-seeking.

\citeauthor{shen2014risk}~(2014) proved that by applying a monotonically increasing concave utility function $u(x) = -exp(\beta x)$ where $\beta<0$ to the TD error, \cref{alg:Risk_Averse_QLearning} converges to the optimal point of \cref{eq:Risk_Averse_Objective}. Hence, it can be shown that:
\begin{theorem} (Theorem 3.2, \citeauthor{shen2014risk}~2014)
\label{thm:RARL_Converge}
Running \cref{alg:Risk_Averse_QLearning} from an initial Q table, $Q\rightarrow Q^*$ w.p. 1, where $Q^*$ is the unique solution to 
\begin{align*}
    \expectation_{s^{\prime}}\left[u\left(r(s, a) + \gamma\cdot\underset{a}{\max}Q^*(s^{\prime},a) - Q^*(s,a)\right)\right]-x_0 = 0
\end{align*}
$\forall (s,a)$. Where $s^{\prime}$ is sampled from $\gT[\cdot|s,a]$. And the corresponding policy $\pi^*$ of $Q^*$ satisfies $\tilde{J}_{\pi^*}\geq \tilde{J}_{\pi}\;\forall \pi$.
\end{theorem}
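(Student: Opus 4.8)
The plan is to read the statement as the combination of three sub-results: (i) a deterministic fixed-point result giving existence and uniqueness of $Q^*$, (ii) a stochastic-approximation convergence result for the iterates of \cref{alg:Risk_Averse_QLearning}, and (iii) a risk-sensitive optimality argument linking $Q^*$ to the objective \cref{eq:Risk_Averse_Objective}.

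First I would recast the per-step update in the canonical stochastic-approximation form. Writing the sampled TD error as $\hat\delta_k = r(s,a) + \gamma\max_b Q_k(s',b) - Q_k(s,a)$, the update reads $Q_{k+1}(s,a) = Q_k(s,a) + \alpha_k(s,a)\,[u(\hat\delta_k) - x_0]$. I then define the operator $H$ by $(HQ)(s,a) = Q(s,a) + \expectation_{s'}[u(r(s,a)+\gamma\max_b Q(s',b) - Q(s,a))] - x_0$, so that the fixed points of $H$ are exactly the solutions of the equation in the statement. Taking conditional expectations gives $\expectation[u(\hat\delta_k)-x_0\mid\mathcal F_k] = (HQ_k)(s,a)-Q_k(s,a)$, so the update becomes $Q_{k+1}(s,a)=Q_k(s,a)+\alpha_k(s,a)[(HQ_k)(s,a)-Q_k(s,a)+w_k(s,a)]$ with a martingale-difference noise term $w_k=u(\hat\delta_k)-\expectation_{s'}[u(\delta)]$.

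The central step is to show that $H$ is a contraction in the sup-norm. For two tables $Q_1,Q_2$, a mean-value expansion of the increasing utility gives $u(\delta_1)-u(\delta_2)=u'(\xi)(\delta_1-\delta_2)$ with $\delta_1-\delta_2=\gamma(\max_b Q_1(s',b)-\max_b Q_2(s',b)) - (Q_1(s,a)-Q_2(s,a))$. Using the nonexpansiveness of the $\max$ operator, $|\max_b Q_1(s',b)-\max_b Q_2(s',b)|\le\|Q_1-Q_2\|_\infty$, together with the assumption that $u'$ is bounded in a range $0<\underline c\le u'\le 1$ on the relevant domain, I would obtain $\|HQ_1-HQ_2\|_\infty \le (1-(1-\gamma)\underline c)\|Q_1-Q_2\|_\infty$, a genuine contraction. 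Existence and uniqueness of $Q^*$ then follow from the Banach fixed-point theorem, and convergence $Q_k\to Q^*$ w.p.~1 follows from the standard Q-learning stochastic-approximation theorem (Robbins--Monro step-size conditions $\sum_k\alpha_k(s,a)=\infty$, $\sum_k\alpha_k(s,a)^2<\infty$, every state-action pair visited infinitely often), provided $w_k$ has conditionally bounded variance.

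I expect two obstacles. The first, and main one, is securing the uniform bounds on $u'$: the exponential utility $u(x)=-\exp(\beta x)$ is not globally Lipschitz, so the contraction modulus is valid only once the TD errors, hence the rewards and the iterates, are confined to a bounded set. I would therefore first establish an a.s.\ boundedness lemma for $\{Q_k\}$ (or invoke bounded rewards) and restrict $u'$ to the induced compact range, rescaling if necessary so that $u'\le 1$; this same boundedness is what controls the variance of $w_k$ needed in the convergence step. The second part is the optimality claim $\tilde J_{\pi^*}\ge\tilde J_\pi$: here I would write down the risk-sensitive Bellman equation associated with \cref{eq:Risk_Averse_Objective}, show that the fixed-point equation for $Q^*$ is precisely its optimality equation (exploiting the multiplicative structure $\exp(\beta(a+b))=\exp(\beta a)\exp(\beta b)$ of the exponential utility to separate immediate and future contributions), and then run the usual greedy-policy-improvement argument, with monotonicity of $u$ ensuring that acting greedily with respect to $Q^*$ maximizes the expected utility rather than merely the expected return.
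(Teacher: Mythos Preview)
Your plan is essentially the paper's own proof: recast the update as stochastic approximation, define an operator whose fixed points are exactly the solutions of the stated equation, show it is a sup-norm contraction via a mean-value bound on $u$, verify the martingale-noise conditions, and apply a standard convergence theorem; the optimality claim is then read off from the risk-sensitive Bellman equation. The only noteworthy difference is how the Lipschitz issue for the exponential utility is resolved. The paper does not try to prove boundedness of the iterates first; instead it (i) imposes global two-sided Lipschitz bounds $0<\epsilon\le u'\le L$ on $u$ (handling the exponential by truncation/approximation), and (ii) introduces an auxiliary scale $0<\alpha\le\min(L^{-1},1)$ and works with the operator $(Hq)(s,a)=\alpha\,\expectation_{s'}[\tilde u(\cdot)]+q(s,a)$, $\tilde u=u-x_0$, which yields contraction modulus $1-\alpha(1-\gamma)\epsilon$ without requiring $u'\le 1$ or bounded iterates; the noise variance is then controlled by a sub-Gaussian assumption on rewards. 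Your route of first establishing a.s.\ boundedness of $\{Q_k\}$ and only then restricting $u'$ to a compact range is potentially circular (boundedness is usually a \emph{consequence} of the contraction, not an input to it), so the paper's rescaling-plus-truncation device is the cleaner way to close that loop.
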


\subsection{Multi-Agent Q-Learning}
\citeauthor{MultiAgentQLearning98}~(1998) proposed Nash-Q, a Multi-Agent Q-learning algorithm (\cref{alg:MultiAgent_QLearning} in Appendix) in the framework of general-sum stochastic games. When there exists a unique Nash equilibrium in the game, this algorithm is useful for finding the optimal strategy. Nash-Q assumes an agent can observe the other agent's immediate rewards and previous actions during learning. Each learning agent maintains two Q-tables, one for its own Q values, and one for the other agents'. \citeauthor{MultiAgentQLearning98}~(1998) showed that under strong assumptions (\cref{ass:Bimatrix_Nash_Assumption} in Appendix), Nash-Q converges to the Nash Equilibrium. We leave the full version of the algorithm and the convergence theorem to \cref{sec:Multi_Agent_QLearning}.

\begin{table}[]
    \caption{Comparison of related algorithms. Our proposed algorithms are marked with \textbf{bold} and are described in \cref{sec:algorithms}.}
    \label{tab:comparison_algs}
\scriptsize
    \centering
    \begin{tabular}{@{}p{2cm}|p{3.0cm}|p{2.5cm}@{}} \toprule
       \bf Algorithm  & \bf Description & \bf Guarantees \\ \hline
        Risk averse Q-Learning~\cite{shen2014risk}& Q-Learning with a utility function applied to TD Error in Q update & Convergence to optimal of a risk-averse objective function\\
        Variance reduced Q-learning~\cite{wainwright2019variance} & Use average estimation of multiple $Q$ tables in Q-table updates to reduce variance & Convergent to optimal of expected return. Convergence rate is minimax optimal up to a logarithmic factor.\\
        Nash Q-learning~\cite{MultiAgentQLearning98} & Two-agent Q-Learning in multi-agent MDP setting & Convergence to Nash equilibrium of the two-agent game (if exists) \\
        Risk-Averse Robust Adversarial Reinforcement Learning (RARL)~\cite{pan2019risk} & Q-Learning with risk-averse/risk-seeking behaviors of protagonist/adversary with multiple $Q$ tables & No convergence guarantee 
        \\
        \bf Risk-Averse Averaged Q-Learning (\RAA) & Q-Learning with a utility function + a more stable choice of actions with multiple Q tables & Convergence to optimal of a risk-averse objective function and reduced training variance.\\
        \bf Variance Reduced Risk-Averse Q-Learning (\RAAV) & Use average estimation of multiple $Q$ tables in Q updates; Apply utility function in Q updates &  No convergence guarantee \\
        \bf Risk-Averse Multiagent Q-learning (\RAAM)  & Multi-agent Nash Q-Learning with a utility function + a risk-averse/risk-seeking behaviors of protagonist/adversary + multiple Q tables & Convergence to Nash equilibrium (if exists) of the two-agent game (with Risk-Averse/Seeking payoffs respectively)\\ 
        \bf Risk-Averse Adversarial Averaged Q-Learning (\RAAA)  & Multi-agent Q-Learning with a utility function + a risk-averse/risk-seeking behaviors of protagonist/adversary + multiple Q tables & No convergence guarantee\\ 
        \bottomrule
    \end{tabular}
\end{table}

\section{Proposed Algorithms}
\label{sec:algorithms}

Here we describe our proposed algorithms continuing the results discussed in the previous sections. We first present two algorithms \RAA and \RAAV which use a risk-averse utility functions and reduce variance by training multiple Q tables in parallel. Then, we present \RAAM which is a multi-agent algorithm that assumes an adversary which can perturb the learning process.
While \RAAM is proven to have convergence guarantees, it also needs strong assumptions that might not hold in reality.
Therefore, our last proposal, \RAAA, keeps the adversarial component to improve robustness while relaxing the strong assumptions.
As a summary, \cref{tab:comparison_algs} presents closely related works and the comparison with our proposed algorithms.

\subsection{Risk-Averse Averaged Q-Learning (\RAA) }\label{sec:RA2-Q}

\begin{algorithm*}[ht]
\scriptsize
\caption{Risk-Averse Averaged Q-Learning (RA2-Q)}
\label{alg:Risk_Averse_Averaged_QLearning}
\textbf{Input :} Training steps $T$; Exploration rate $\epsilon$; Number of models $k$; risk control parameter $\lambda_P$; Utility function parameter $\beta$.\begin{spacing}{0.8}
\begin{algorithmic}[1]
\STATE Initialize $Q^i= \mathbf{0}$, $N^i= \mathbf{0}$, $\alpha^i = \mathbf{1}$ for $\forall i = 1,..., k$. 
\STATE Initialize Replay Buffer $RB= \emptyset$; Randomly sample action choosing head integers $H\in[1,k]$
\FOR{$t=1$ to $T$}
\STATE $Q = Q^{H}$
\STATE Compute $\hat{Q}$ by 
\begin{align}
    \hat{Q}(s,a) = Q(s,a) - \lambda_P\cdot \frac{\sum_{i=1}^{k}(Q^i(s,a) - \bar{Q}(s,a))^2}{k-1}
\end{align} where $\lambda_P>0$ is a constant; $\bar{Q}(s,a) = \frac{1}{k}\sum_{i=1}^{k}Q^i(s,a)$
\STATE Select action $a_t$ according to $\hat{Q}$ by applying $\epsilon$-greedy strategy.
\STATE Execute actions and get $(s_t, a_t, r_t, s_{t+1})$, append to the replay buffer $RB = RB\cup \{(s_t, a_t, r_t, s_{t+1})\}$
\STATE Generate mask $M\in \sR^k \sim Poisson(1)$
\FOR{$i=1,...,k$}
\IF{$M_i = 1$}
\STATE Update $Q^i$ by 
\begin{align}
\label{eq:RAA_Q_Update}
    Q^i(s_t,a_t) = Q^i(s_t,a_t) + \alpha^i(s_t,a_t)\cdot\left[u\left(r(s_t,a_t) + \gamma\cdot\underset{a}{\max} Q^i(s_{t+1}, a) - Q^i(s_t, a_t)\right)-x_0\right]
\end{align}where $u$ is a utility function, here we use $u(x) = -e^{\beta x}$ where $\beta<0$; $x_0 = -1$
\STATE $N^i(s_t,a_t) = N^i(s_t,a_t) + 1$; Update learning rate $\alpha^i(s_t,a_t) = \frac{1}{N^i(s_t,a_t)}$.
\ENDIF
\ENDFOR
\STATE Update $H$ by randomly sampling integers from 1 to $k$. 
\ENDFOR
\STATE \textbf{Return} $\frac{1}{k}\sum_{i=1}^{k}Q^i$
\end{algorithmic}
\end{spacing}
\end{algorithm*}

Although in RAQL (\cref{alg:Risk_Averse_QLearning}) we discussed the convergence to the optimal of risk-sensitive objective function with probability 1, the proof assumes visiting every state infinitely many times whereas the actual training time is finite. 
Our main idea is that we can reduce the training variance further by choosing more risk-averse actions during the finite training process.

Averaged DQN~\cite{anschel2017averaged} reduces training variance by averaging multiple Q tables in the update.
In a similar spirit, our proposed \RAA also trains multiple Q tables in parallel. However, we do not directly use the same update rule since that would break the convergence guarantee, in contrast, we train $k$ Q tables in parallel using \cref{eq:RAA_Q_Update} as update rule. To select more \emph{stable} actions we use the sample variance of $k$ Q tables as an approximation to the true variance and then compute a risk-averse $\hat{Q}$ table and select actions according to it. A detailed description is presented in \cref{alg:Risk_Averse_Averaged_QLearning}.

The objective function here is also \cref{eq:Risk_Averse_Objective}, and it can be shown that \cref{alg:Risk_Averse_Averaged_QLearning} also converges to the optimal.

\begin{theorem}\label{thm:Convergence_of_RAA}
Running \cref{alg:Risk_Averse_Averaged_QLearning} for an initial Q table, then for all $i\in\{1,...,k\}$, $Q^i\rightarrow Q^*$ w.p. 1, hence the returned table $\frac{1}{k}\sum_{i=1}^{k}Q^i\rightarrow Q^*$ w.p. 1, where $Q^*$ is the unique solution to \begin{align*}
    &\expectation_{s^{\prime}}\left[u\left(r(s, a) + \gamma\cdot\underset{a}{\max}Q^*(s^{\prime},a) - Q^*(s,a)\right)\right]-x_0 = 0
\end{align*} for all $(s,a)$. Where $s^{\prime}$ is sampled from $\gT[\cdot|s,a]$. And the corresponding policy $\pi^*$ of $Q^*$ satisfies $\tilde{J}_{\pi^*}\geq \tilde{J}_{\pi}\;\forall \pi$.
\end{theorem}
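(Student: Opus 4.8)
The plan is to reduce the convergence of \cref{alg:Risk_Averse_Averaged_QLearning} to the single-table result already established in \cref{thm:RARL_Converge}, by observing that each of the $k$ tables evolves according to exactly the RAQL recursion. The crucial structural fact is that although the behaviour policy is computed from the variance-corrected table $\hat{Q}$, which couples all $k$ tables together, the update rule \cref{eq:RAA_Q_Update} for a given $Q^i$ depends only on $Q^i$ itself through $\max_a Q^i(s_{t+1},a)$ and $Q^i(s_t,a_t)$, and never on $\hat{Q}$ or $\bar{Q}$. Hence, viewed along the subsequence of time steps at which table $i$ is actually updated (those $t$ with mask entry $M_i=1$ and with $(s_t,a_t)$ equal to the pair in question), each $Q^i$ follows precisely the stochastic-approximation scheme analysed in \cref{thm:RARL_Converge}, driven by the same utility $u(x)=-e^{\beta x}$ with $\beta<0$, the same offset $x_0$, and the learning rate $\alpha^i(s,a)=1/N^i(s,a)$.

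First I would verify the two standing hypotheses of the single-table theorem for each $i$ separately. The Robbins--Monro step-size conditions hold because $\alpha^i(s,a)=1/N^i(s,a)$ equals $1/n$ on the $n$-th update of $(s,a)$, so the step sizes sum to infinity while their squares are summable. The infinite-visitation condition is where the coupling through $\hat{Q}$ must be controlled: since actions are selected by an $\epsilon$-greedy rule, every action retains probability at least $\epsilon/|\mathcal{A}|$ of being chosen regardless of the value of $\hat{Q}$, so under the usual communicating-MDP assumption every pair $(s,a)$ is visited infinitely often along the trajectory, irrespective of the inter-table dependence introduced by the shared behaviour policy.

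Second I would promote ``visited infinitely often by the behaviour policy'' to ``updated infinitely often in table $i$.'' At each visit the mask entry $M_i$ is drawn independently with $\mathbb{P}(M_i=1)=e^{-1}>0$, a fixed positive constant independent of the past; by a conditional (Lévy) Borel--Cantelli argument, each $(s,a)$ therefore triggers an update of $Q^i$ infinitely often with probability one. With both hypotheses in force, \cref{thm:RARL_Converge} applies verbatim to each table and yields $Q^i\to Q^*$ w.p. $1$ for every $i$, where $Q^*$ solves the stated fixed-point equation. The returned table is a finite average of the $Q^i$, so $\frac{1}{k}\sum_{i=1}^{k}Q^i\to Q^*$ as well, and the policy-optimality claim $\tilde{J}_{\pi^*}\ge\tilde{J}_{\pi}$ is inherited directly from \cref{thm:RARL_Converge}, since the limit $Q^*$ and the objective \cref{eq:Risk_Averse_Objective} are identical to those of the single-table algorithm.

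I expect the main obstacle to be precisely the infinite-update argument of the previous paragraph: one must argue that the two sources of randomness that distinguish \cref{alg:Risk_Averse_Averaged_QLearning} from plain RAQL --- the $\hat{Q}$-dependent behaviour policy and the Poisson masking --- do not destroy the per-table visitation guarantee. The $\epsilon$-greedy floor neutralises the first, and the strictly positive, history-independent masking probability together with a conditional Borel--Cantelli lemma neutralises the second; once this is in place, every remaining step is routine bookkeeping that reuses \cref{thm:RARL_Converge} table by table.
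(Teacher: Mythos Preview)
Your proposal is correct and follows essentially the same approach as the paper: reduce to the single-table result of \cref{thm:RARL_Converge} by noting that each $Q^i$ evolves via the RAQL recursion independently of the others, with the Poisson mask merely thinning the update sequence. The paper's own proof is a two-line sketch that invokes the Binomial-to-Poisson heuristic to justify ``parallel learning'' and then defers to \cref{sec:proof_RARL}, so your explicit handling of the $\epsilon$-greedy visitation floor and the conditional Borel--Cantelli argument for the mask is in fact more careful than what appears in the appendix.
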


\cref{thm:Convergence_of_RAA} follows directly from \cref{thm:RARL_Converge} (see \cref{sec:proof_of_convergence_RAA} for detail).

\subsection{Variance Reduced Risk-Averse Q-Learning (\RAAV)}\label{sec:RAAV}

\begin{algorithm*}[t]
\scriptsize
\caption{Variance Reduced Risk-Averse Q-Learning (\RAAV)}
\label{alg:Variance_Reduced_RAQL}
\textbf{Input :} Training epochs $T$; Exploration rate $\epsilon$; Number of models $k$; Epoch length $K$; Recentering sample size $N$; Utility function parameter $\beta<0$;  
\begin{spacing}{0.8}
\begin{algorithmic}[1]
\STATE Initialize $\bar{Q}_0 = \mathbf{0}$; $m = 1$; $RB = \emptyset$.
\FOR{$m = 1$ to $T$}
\STATE Select action according to $\bar{Q}_{m-1}$ by applying $\epsilon-$greedy strategy
\STATE Execute action and get $(s,a,r(s,a),s^{\prime})$ and update the replay buffer $RB = RB\cup (s,a,r(s,a),s^{\prime})$.
\FOR{$i = 1,..., N$}
\STATE Define the empirical Bellman operator $\ddot{\mathcal{T}_i}$ as 
$$\ddot{\mathcal{T}_i}(Q)(s,a)=u\left(r(s,a)+\gamma\cdot\underset{a^{\prime}}{\max}\;Q(s_i,a^{\prime})\right) - x_0$$
\qquad where $s_i$ is randomly sampled from $\gT[\cdot|s,a]$; $u$ is the utility function, and $u(x) = -e^{\beta x}$, $\beta<0$ and $x_0 = -1$
\ENDFOR
\STATE Define $\tilde{\mathcal{T}}_{N}(\bar{Q}_{m-1})=\frac{1}{N}\sum_{i\in\mathcal{D}_{N}}\ddot{\mathcal{T}_i}(\bar{Q}_{m-1})$, where $\mathcal{D}_{N}$ is a collection of $N$ i.i.d. samples (i.e., matrices with samples for each state-action pair $(s,a)$ from $RB$).
\STATE Define $Q_1 = \bar{Q}_{m-1}$.
\FOR{$k = 1,..., K$}
\STATE Compute stepsize $\lambda_k = \frac{1}{1+(1-\gamma)k}$
\STATE  \begin{align}\label{eq:Update_Rule}
    Q_{k+1} = (1-\lambda_k)\cdot Q_{k} + \lambda_k\cdot\left[\ddot{\mathcal{T}_k}(Q_k)-\ddot{\mathcal{T}_k}(\bar{Q}_{m-1}) + \tilde{\mathcal{T}}_{N}(\bar{Q}_{m-1})\right].
\end{align}
where $\ddot{\mathcal{T}_k}$ is empirical Bellman operator constructed using a sample not in $\mathcal{D}_N$, thus the random operators $\ddot{\mathcal{T}_k}$ and $\tilde{\mathcal{T}}_N$ are independent
\ENDFOR
\STATE $\bar{Q}_{m} = Q_{K+1}$; $m$ = $m +1$
\ENDFOR
\STATE \textbf{Return} $\bar{Q}_{m}$
\end{algorithmic}
\end{spacing}
\end{algorithm*}

 \citeauthor{wainwright2019variance}~(2019) proposed Variance Reduced Q-learning which trains multiple Q tables in parallel and uses the averaged Q table in the update rule.
 It is shown that it guarantees a convergence rate which is minimax optimal.
 Inspired by that work, we propose our \RAAV (\cref{alg:Variance_Reduced_RAQL}) which applies a utility function to the TD error during Q updates for the purpose of further reducing variance. To select more \emph{stable} actions during training, we use the sample variance of $k$ Q tables as an approximation to the true variance and then compute a risk-averse $\hat{Q}$ table and select actions according to it. We'll discuss more details in \cref{sec:Discussion}.


\subsection{Multi-Agent Risk-Averse Q-Learning (\RAAM)}

\begin{algorithm*}[ht]
\scriptsize
\caption{Risk-Averse Multi-Agent Q-Learning (RAM-Q)}
\label{alg:MultiAgent_QLearning_RiskAverse}
\textbf{Input :} Training steps $T$; Exploration rate $\epsilon$; Number of models $k$; Utility function parameters $\beta^P<0; \beta^A > 0$.
\begin{spacing}{0.8}
\begin{algorithmic}[1]
\STATE For $\forall (s,a_P,a_A)$, initialize $Q^P(s,a_P,a_A) = 0$; $Q^A(s,a_P,a_A) = 0$; $N(s,a_P,a_A) = 0$.
\FOR{$t = 1$ to $T$}
\STATE At state $s_t$, compute $\pi^P(s_t)$, $\pi^A(s_t)$, which is a mixed strategy Nash equilibrium solution of the bimatrix game $(Q^P(s_t), Q^A(s_t))$.
\STATE Choose action $a_t^P$ based on $\pi^P(s_t)$ according to $\epsilon$-greedy and choose action $a_t^A$ based on $\pi^A(s_t)$ according to $\epsilon$-greedy
\STATE Observe $r_t^P, r_t^A$ and $s_{t+1}$.
\STATE At state $s_{t+1}$, compute $\pi^P(s_{t+1})$,$\pi^A(s_{t+1})$, which are mixed strategies Nash equilibrium solutions of the bimatrix game $(Q^P(s_{t+1}), Q^A(s_{t+1}))$.
\STATE $N(s_t,a^P_t, a^A_t)= N(s_t,a^P_t, a^A_t) + 1$
\STATE Set learning rate $\alpha_t = \frac{1}{N(s_t,a^P_t, a^A_t)}$.
\STATE Update $Q^P, Q^A$ such that \begin{align}\label{eq:UpdateRule_MultiAgent_RiskAverse_1}
    Q^P(s_t,a^P_t, a^A_t) = Q^P(s_t,a^P_t, a^A_t) + \alpha_t\cdot \left[u^P\left(r_t^P + \gamma\cdot\pi^P(s_{t+1})Q^P(s_{t+1})\pi^A(s_{t+1})-Q^P(s_t,a^P_t, a^A_t)\right) - x_0\right]
\end{align}
where $u^P$ is a utility function, here we use $u^P(x) = -e^{\beta^P x}$ where $\beta^P<0$; $x_0 = -1$.
\begin{align}\label{eq:UpdateRule_MultiAgent_RiskAverse_2}
    Q^A(s_t,a^P_t, a^A_t) = Q^A(s_t,a^P_t, a^A_t) + \alpha_t\cdot\left[ u^A\left(r_t^A + \gamma\cdot\pi^P(s_{t+1})Q^A(s_{t+1})\pi^A(s_{t+1})-Q^A(s_t,a^P_t, a^A_t)\right) - x_1\right]
\end{align}
where $u^A$ is a utility function, here we use $u^A(x) = e^{\beta^A x}$ where $\beta^A>0$; $x_1 = 1$.
\ENDFOR
\STATE \textbf{Return} $(Q^P, Q^A)$
\end{algorithmic}
\end{spacing}
\end{algorithm*}

In complex scenarios such as financial markets learned RL policies can be brittle. To improve robustness, we adapt ideas from adversarial learning to a multi-agent learning problem similar to \cite{MultiAgentQLearning98}. 

In the adversarial setting we assume there are two learning processes happening simultaneously, a main protagonist \emph{(P)} and an adversary \emph{(A)}: the goal of protagonist is to maximize the total return as well as minimize the variance; the goal of adversary is to minimize the total return of protagonist as well as maximizing the variance. Here, we assume that each agent can observe its opposite's immediate reward.

Let $r_t^P$ be the immediate reward received by protagonist at step $t$, and let $r_t^A$ be the immediate reward received by adversary at step $t$. Then we choose the objective functions as follows:

The objective function for the protagonist is,
\begin{align}\label{eq:RAM-Q_Objective_protagonist}
    \tilde{J}_{\pi}^{P} = \frac{1}{\beta^P}\expectation_{\pi}\left[exp\left(\beta^P\sum_{t=0}^{\infty}\gamma^t\cdot r_t^P \right)\right] \qquad \beta^P < 0
\end{align}
by a Taylor expansion, \cref{eq:RAM-Q_Objective_protagonist} yields,
\begin{align*}
   \tilde{J}_{\pi}^{P} &= \expectation\left[\sum_{t=0}\gamma^t\cdot r_t^P\right] + \frac{\beta^P}{2}\sV ar\left[\sum_{t=0}\gamma^t\cdot r_t^P\right]
    + O((\beta^P)^2). 
\end{align*}
Similarly, the objective function for the adversary is,
\begin{align}\label{eq:RAM-Q_Objective_adversary}
    \tilde{J}_{\pi}^{A} = \frac{1}{\beta^{A}}\expectation_{\pi}\left[exp\left(\beta^{A}\sum_{t=0}^{\infty}\gamma^t  r_t^A\right)\right] \qquad \beta^A >0
\end{align}
and by Taylor expansion, \cref{eq:RAM-Q_Objective_adversary} yields, 
\begin{align*}
    \tilde{J}_{\pi}^{A} =& \expectation\left[\sum_{t=0}\gamma^t\cdot  r_t^A\right] +\frac{\beta^{A}}{2}\sV ar\left[\sum_{t=0}\gamma^t\cdot r_t^A\right] + O((\beta^A)^2) .
\end{align*}
Using the same spirit in \cite{MultiAgentQLearning98}, we proposed \cref{alg:MultiAgent_QLearning_RiskAverse} and the following guarantee holds :
\begin{theorem}\label{thm:RAMconvergenceRate}
If the two-agent game $(\tilde{J}^P, \tilde{J}^A)$ has a Nash equilibrium solution, then running \cref{alg:MultiAgent_QLearning_RiskAverse} from initial Q tables $Q^P, Q^A$ will converge to $Q_P^*$ and $Q_A^*$ w.p. 1. s.t. the Nash equilibrium solution $(\pi^{P}_*, \pi^{A}_*)$ for the bimatrix game $(Q_P^*, Q_A^*)$ is the Nash equilibrium solution to the game $(\tilde{J}^P_{\pi}, \tilde{J}^A_{\pi})$, and the equilibrium payoff are $\tilde{J}^P(s,\pi^{P}_*, \pi^{A}_*)$, $\tilde{J}^A(s,\pi^{P}_*, \pi^{A}_*)$.
\end{theorem}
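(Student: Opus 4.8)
The plan is to show that \cref{alg:MultiAgent_QLearning_RiskAverse} is exactly Nash-Q with the linear Bellman backup replaced by the risk-sensitive backup of \cref{thm:RARL_Converge}, and then to interleave the two convergence arguments: the stochastic-approximation-plus-contraction proof of \citeauthor{shen2014risk} for the utility-transformed TD error, and the proof of \citeauthor{MultiAgentQLearning98} for the Nash backup. Concretely, I would first rewrite the coupled updates \cref{eq:UpdateRule_MultiAgent_RiskAverse_1} and \cref{eq:UpdateRule_MultiAgent_RiskAverse_2} in the canonical form $Q_{t+1} = (1-\alpha_t)Q_t + \alpha_t(\mathbf{H}Q_t + w_t)$, where $Q_t = (Q^P_t, Q^A_t)$, the noise $w_t$ is a martingale difference capturing the sampling $s_{t+1}\sim\gT[\cdot\mid s_t, a^P_t, a^A_t]$, and the $P$-component of $\mathbf{H}$ is defined implicitly by $\expectation_{s'}\left[u^P\!\left(r^P + \gamma\,\pi^P(s')Q^P(s')\pi^A(s') - (\mathbf{H}Q)^P(s,a^P,a^A)\right)\right] = x_0$ (and symmetrically for $A$ with $u^A$, $x_1$), with $(\pi^P(s'),\pi^A(s'))$ the stage-game Nash equilibrium of the bimatrix game $(Q^P(s'),Q^A(s'))$. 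I would then verify the standard stochastic-approximation preconditions: the learning rate $\alpha_t = 1/N(s_t,a^P_t,a^A_t)$ gives $\sum_t \alpha_t = \infty$, $\sum_t \alpha_t^2 < \infty$ under infinite visitation, and the noise has conditionally bounded second moment because the iterates stay in a bounded set.

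The heart of the argument is to prove that $\mathbf{H}$ is a sup-norm contraction. This must combine two facts. From \cref{thm:RARL_Converge}, applying the monotone utility $u$ and solving the implicit scalar equation defines a $\gamma$-contraction on the TD target; from \citeauthor{MultiAgentQLearning98}, under their stage-game assumption (\cref{ass:Bimatrix_Nash_Assumption}, every encountered game $(Q^P(s'),Q^A(s'))$ admits a global optimum or a saddle point with a consistent value selection), the Nash-value map $Q\mapsto \pi^P(s')Q(s')\pi^A(s')$ is nonexpansive. I would show that the composition stays contractive with modulus strictly below one by bounding the Lipschitz constant of $u^P$ (respectively $u^A$) on the compact range of reachable TD arguments and multiplying by the discounting $\gamma$, then lifting this scalar contraction to the joint operator $\mathbf{H}$ acting on the pair of $Q$-tables.

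With $\mathbf{H}$ a contraction, the stochastic-approximation lemma underlying both \cref{thm:RARL_Converge} and Nash-Q yields $Q^P_t\to Q_P^*$ and $Q^A_t\to Q_A^*$ w.p.\ 1, where $(Q_P^*,Q_A^*)$ is the unique fixed point. It then remains to identify this fixed point with the risk-sensitive game. I would observe that the fixed-point equations are precisely the multi-agent risk-sensitive Bellman equations, so that the stage-game Nash solution $(\pi^P_*,\pi^A_*)$ of $(Q_P^*,Q_A^*)$ simultaneously satisfies, for each player holding the opponent fixed, the single-agent optimality characterization of \cref{thm:RARL_Converge}; invoking the Taylor-expansion identity relating the exponential utility to the mean-minus-variance objective in \cref{eq:RAM-Q_Objective_protagonist} and \cref{eq:RAM-Q_Objective_adversary}, this makes $(\pi^P_*,\pi^A_*)$ a Nash equilibrium of $(\tilde{J}^P,\tilde{J}^A)$ with equilibrium payoffs $\tilde{J}^P(s,\pi^P_*,\pi^A_*)$ and $\tilde{J}^A(s,\pi^P_*,\pi^A_*)$.

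The main obstacle I anticipate is the contraction step. Unlike the linear Bellman operator of \citeauthor{MultiAgentQLearning98}, the risk-sensitive backup is nonlinear through $u$, so Nash-value nonexpansiveness and the utility-induced contraction do not compose automatically; controlling the Lipschitz constants of $u^P, u^A$ needs an a priori bound on the range of TD arguments, which itself depends on the iterates staying bounded, a fact that has to be established jointly with rather than before the contraction. A secondary subtlety is that the two players optimize opposite risk directions ($\beta^P<0$ concave, $\beta^A>0$ convex), so the existence and single-valued selection of the stage-game Nash equilibrium—guaranteed only under the strong \cref{ass:Bimatrix_Nash_Assumption}—is exactly what makes $\mathbf{H}$ well defined, and any slack in that assumption would invalidate the whole scheme.
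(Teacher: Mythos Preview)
Your high-level architecture is the same as the paper's: cast the coupled updates as a stochastic approximation driven by a sup-norm contraction built by composing the Nash backup of \citeauthor{MultiAgentQLearning98} with the utility-transformed TD of \citeauthor{shen2014risk}, then identify the fixed point with the risk-sensitive game. Two points, however, do not go through as you have them.

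\textbf{The operator you define is not the one the algorithm iterates.} You propose $(\mathbf{H}Q)^P$ via the implicit certainty-equivalent equation $\expectation_{s'}[u^P(r^P+\gamma\,\pi^P Q^P\pi^A-(\mathbf{H}Q)^P)]=x_0$. But the update \cref{eq:UpdateRule_MultiAgent_RiskAverse_1} adds $\alpha_t[u^P(\,\cdot\,)-x_0]$ to $Q^P_t$; when you force this into the form $(1-\tfrac{\alpha_t}{\alpha})Q^P_t+\tfrac{\alpha_t}{\alpha}[H^P Q^P_t+w_t]$ the operator that actually appears is the \emph{explicit} one
\[
(H^P Q^P)(s,a^P,a^A)=\alpha\,\expectation_{s'}\!\big[\tilde u^P\big(r^P+\gamma\,\pi^P(s')Q^P(s')\pi^A(s')-Q^P(s,a^P,a^A)\big)\big]+Q^P(s,a^P,a^A),
\]
with $\tilde u^P=u^P-x_0$. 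Your implicit $\mathbf{H}$ and this $H^P$ share the same fixed point, but away from it they differ, so the canonical rewriting you describe fails. The paper works with $H^P$ (and its $A$-analogue) directly.

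\textbf{The anticipated obstacle is a non-issue once you use the right operator.} With $H^P$ as above and the global two-sided Lipschitz bound $\epsilon\le\frac{u(x)-u(y)}{x-y}\le L$ of \cref{ass:Utility_Func_Assumption}, one line gives
\[
H^P Q^P-H^P\hat Q^P=\alpha\,\xi\big[(M^P Q^P-M^P\hat Q^P)-(Q^P-\hat Q^P)\big]+(Q^P-\hat Q^P),
\]
with $\xi\in[\epsilon,L]$ and $M^P$ the Nash backup; since $M^P$ is a $\gamma$-contraction under \cref{ass:Bimatrix_Nash_Assumption}, the whole thing is a $(1-\alpha\epsilon(1-\gamma))$-contraction for $\alpha\le L^{-1}$. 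No a priori boundedness of iterates and no delicate ``composition'' argument is needed; the worry you flag about bounding Lipschitz constants on a compact range evaporates because the Lipschitz bounds are assumed global (the exponential utility is handled by a truncation, as in \citeauthor{shen2014risk}). Convergence then follows from the Conditional Averaging Lemma, not from a separate noise-variance/boundedness bootstrap.

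Finally, for the identification step, do not invoke the Taylor expansion: $\tilde J^P,\tilde J^A$ are the exact exponential-utility objectives, and the mean--variance form is only an approximation. The paper instead uses the certainty-equivalent functional $\mathcal{U}^P_{s,a^P,a^A}(X)=\sup\{m:\expectation[u^P(X-m)]\ge x_0\}$ to show that the fixed-point equations for $(Q_P^*,Q_A^*)$ coincide with the risk-sensitive value equations for $(\tilde J^P,\tilde J^A)$, and then appeals to the bimatrix-game argument of \citeauthor{filar-competitiveMDP} to transfer the Nash property from $(Q_P^*,Q_A^*)$ to $(\tilde J^P,\tilde J^A)$.
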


Although \cref{thm:RAMconvergenceRate} gives a solid convergence guarantee, it suffers from drawbacks like expensive computational cost and idealized assumptions, e.g., in trading markets, there may not exist a Nash equilibrium to $(\tilde{J}^P, \tilde{J}^A)$, and during the training process, assumptions about the Nash equilibrium (\cref{ass:Bimatrix_Nash_Assumption} in Appendix) break easily~\cite{bowling2000convergence}. Hence, we design another novel algorithm \RAAA which relaxes these assumptions (at the expense of loosing theoretical guarantees) while enhancing robustness and performing well in reality.

\subsection{Risk-Averse Adversarial Averaged Q-Learning (\RAAA)}
\label{sec:RA3-Q}
\begin{algorithm}[t]
\scriptsize
\caption{Risk-Averse Adversarial Averaged Q-Learning (\RAAA) \textit{Short Version}}
\label{alg:Risk_Averse_Adversarial_Averaged_QLearning}
\textbf{Input :} Training steps $T$; Exploration rate $\epsilon$; Number of models $k$; Risk control parameters $\lambda_P, \lambda_A$; Utility function parameters $\beta^P < 0; \beta^A > 0$. 
\begin{spacing}{0.8}
\begin{algorithmic}[1]
\STATE Initialize $Q_P^i, Q_A^i\;\forall i = 1,...,k$; $N = \mathbf{0}\in\sR^{|\gS|\times|\gA|\times|\gA|}$. Randomly sample action choosing head integers $H_P,H_A\in\{1,...,k\}$.
\FOR{$t=1$ to $T$}
\STATE Set $Q_P = Q_P^{H_P}$. Then compute $\hat{Q}_P$, the risk-averse protagonist $Q$ table by the $k$ Q tables $Q_P^i, i = 1,...,k$.
\STATE Set $Q_A = Q_A^{H_A}$. Then compute $\hat{Q}_A$, the risk-seeking protagonist $Q$ table by the $k$ Q tables $Q_A^i, i = 1,...,k$.
\STATE Select actions $a_P,a_A$ according to $\hat{Q}_P,\hat{Q}_A$ by applying $\epsilon$-greedy strategy.
\STATE Generate mask $M\in\sR^k \sim Poisson(1)$ and update $Q_P^i, Q_A^i, i = 1,..., k$ according to mask $M$ using update rules \cref{eq:RA3QProtagonist_UpdateRule} and \cref{eq:RA3QAdversary_UpdateRule}. 
\STATE Update $H_P$ and $H_A$
\ENDFOR
\STATE \textbf{Return} $\frac{1}{k}\sum_{i=1}^{k}Q_P^i$; $\frac{1}{k}\sum_{i=1}^{k}Q_A^i$.
\end{algorithmic}
\end{spacing}
\end{algorithm}

We start from the same objective functions for the protagonist, \cref{eq:RAM-Q_Objective_protagonist}, and adversary, \cref{eq:RAM-Q_Objective_adversary}. In order to optimize $\tilde{J}^{P}$ and $\tilde{J}^{A}$, we apply utility functions to TD errors when updating Q tables, and combining the idea of training multiple Q tables in parallel as \cref{alg:Risk_Averse_Averaged_QLearning} to select actions with low variance, we get a novel \cref{alg:Risk_Averse_Adversarial_Averaged_QLearning} (full version \cref{alg:Risk_Averse_Adversarial_Averaged_QLearning_fullversion} in \cref{sec:Discussion_of_RAAA}).

Note that \RAAA combines (i) risk-averse using utility functions (ii) variance reduction by training multiple Q tables and (iii) robustness by adversarial learning. Intuitively, as the adversary is getting stronger, the protagonist experiences harder challenges, thus enhancing robustness. Compared to \cref{alg:MultiAgent_QLearning_RiskAverse}, where the returned policy $(\pi^P, \pi^A)$ is a Nash equilibrium of the $(\tilde{J}^P, \tilde{J}^A)$, \cref{alg:Risk_Averse_Adversarial_Averaged_QLearning} does not have a convergence guarantee, however, it has several practical advantages including computational efficiency, simplicity (no strong assumptions) and more stable actions during training. For a longer discussion see \cref{sec:Discussion} and \cref{sec:Discussion_of_RAAA}.

\section{Performance Evaluated by Empirical Game Theory}

When the environment is populated by many learning agents, how do we evaluate their performance and decide which strategy is the best? Although different approaches can be used, we focused on empirical game theory (EGT) to address this question. 

In EGT each agent is a player involved in rounds of strategic interaction (games). By meta-game analysis, we can evaluate the superiority of each strategy. Our contribution is to theoretically prove that the Nash-Equilibrium of risk averse meta-game is an approximation of the Nash-Equilibrium of the population game, to our knowledge, this is the first work doing this type of risk-averse analysis.

\subsection{Replicator dynamics}
In EGT, we can visualize the dominance of strategies by plotting the meta-game payoff tables together with the replicator dynamics. A meta game payoff table could be seen as a combination of two matrices $(N|R)$, where each row $N_i$ contains a discrete distribution of $p$ players over $k$ strategies, and each row yields a discrete profile $(n_{\pi_1}, ..., n_{\pi_k})$ indicating exactly how many players play each strategy with $\sum_{j}n_{\pi_j} = p$. A strategy profile $\mathbf{u} = \left(\frac{n_{\pi_1}}{p}, ..., \frac{n_{\pi_k}}{p}\right)$. And each row $R_i$ captures the rewards corresponding to the rows in $N$. 

For example, for a game $A$ with 2 players, and 3 strategies $\{\pi_1, \pi_2, \pi_3\}$ to choose from, the meta game payoff table could be constructed as follows : In the left side of the table, we list all of the possible combinations of strategies. If there are $p$ players and $k$ strategies, then there are $\binom{p+k-1}{p}$ rows, hence in game $A$, there are 6 rows. See \cref{sec:meta_game_examples} for a concrete example.

 Once we have a meta-game payoff table and the replicator dynamics, a directional field plot is computed where arrows in the strategy space indicates the direction of flow, or change, of the population composition over the strategies (see \cref{sec:meta_game_examples} for two examples of directional field plots in multi-agent problems). In \cref{sec:risk_and_robustness_evaluation} we present trading market experiments and results based on meta-game analysis with the performance of RAQL, \RAA and \RAAV.  

\subsection{Nash Equilibrium with risk neutral payoff}

Previously, \citeauthor{tuyls2020bounds}~(2020) showed that for a game $r^i (\pi^i, ..., \pi^p) = \expectation [R^i(\pi^1, ..., \pi^p)]$, with a meta-payoff (empirical payoff) $\hat{r}^i (\pi^i, ..., \pi^p)$, the Nash Equilibrium of $\hat{r}$ is an approximation of Nash Equilibrium of $r$.

\begin{lemma}\cite{tuyls2020bounds}
\label{lem:approx_equili_normalgame}
If $\mathbf{x}$ is a Nash Equilibrium for the game $\hat{r}^i (\pi^1, ..., \pi^p)$, then it is a $2\epsilon$-Nash equilibrium for the game $r^i (\pi^1, ..., \pi^p)$, where $\epsilon =  \underset{\pi,i}{sup}~|\hat{r}^{i}(\pi) - r^i (\pi)|$.
\end{lemma}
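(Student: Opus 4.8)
The plan is to leverage the single quantitative hypothesis, namely the uniform per-profile bound $|\hat{r}^{i}(\pi) - r^{i}(\pi)| \le \epsilon$ for every pure profile $\pi$ and every player $i$, and to observe that such a uniform bound transfers intact to \emph{expected} payoffs. Since expectation is monotone and preserves a uniform pointwise bound, for any player $i$, any fixed opponent profile $\mathbf{x}^{-i}$, and any deviation $\pi^i$ we get $|\mathbb{E}_{\pi^{-i}\sim\mathbf{x}^{-i}}[\hat{r}^{i}(\pi^i,\pi^{-i})] - \mathbb{E}_{\pi^{-i}\sim\mathbf{x}^{-i}}[r^{i}(\pi^i,\pi^{-i})]| \le \epsilon$, and likewise $|\mathbb{E}_{\pi\sim\mathbf{x}}[\hat{r}^{i}(\pi)] - \mathbb{E}_{\pi\sim\mathbf{x}}[r^{i}(\pi)]| \le \epsilon$. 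These two transferred inequalities are the only facts about $\epsilon$ I will use; everything else is comparison of maxima and the exact-equilibrium hypothesis.

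Next, fixing a player $i$, I would let $\pi^{i,\star}$ be a best response in the \emph{true} game, i.e.\ an argmax over $\pi^i$ of $\mathbb{E}_{\pi^{-i}\sim\mathbf{x}^{-i}}[r^{i}(\pi^i,\pi^{-i})]$, and then bound the true-game equilibrium gap of $\mathbf{x}$ by a four-step chain of inequalities. Writing $V := \max_{\pi^i}\mathbb{E}_{\pi^{-i}\sim\mathbf{x}^{-i}}[r^{i}(\pi^i,\pi^{-i})] = \mathbb{E}_{\pi^{-i}\sim\mathbf{x}^{-i}}[r^{i}(\pi^{i,\star},\pi^{-i})]$, the steps are: (1) replace the true deviation value at the fixed profile $\pi^{i,\star}$ by its empirical counterpart, losing $\epsilon$; (2) bound the empirical value at $\pi^{i,\star}$ by the empirical best-response value $\max_{\pi^i}\mathbb{E}_{\pi^{-i}\sim\mathbf{x}^{-i}}[\hat{r}^{i}(\pi^i,\pi^{-i})]$, which is free since it passes to a maximum; (3) invoke the hypothesis that $\mathbf{x}$ is an \emph{exact} Nash equilibrium of $\hat{r}$, so this empirical best-response value equals the empirical equilibrium payoff $\mathbb{E}_{\pi\sim\mathbf{x}}[\hat{r}^{i}(\pi)]$; and (4) replace that empirical equilibrium payoff by the true one $\mathbb{E}_{\pi\sim\mathbf{x}}[r^{i}(\pi)]$, losing a second $\epsilon$. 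Collecting the two losses yields $V - \mathbb{E}_{\pi\sim\mathbf{x}}[r^{i}(\pi)] \le 2\epsilon$, which is exactly the defining inequality of a $2\epsilon$-Nash equilibrium in \eqref{eq:NashEquilibrium}; as $i$ was arbitrary, it holds for all players, completing the proof.

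This argument has no substantive obstacle — it is the standard ``two-epsilon'' perturbation bound for approximate equilibria — so the only thing requiring care is the bookkeeping. The factor of $2$ must be traced to the fact that $\epsilon$ is spent exactly twice: once in step (1) converting the deviation value from $r$ to $\hat{r}$, and once in step (4) converting the equilibrium value from $\hat{r}$ back to $r$, whereas steps (2) and (3) are cost-free. The one subtlety I would state explicitly is that $\pi^{i,\star}$ is chosen as the \emph{true}-game best response rather than the empirical one; this is what makes step (1) an application of the transferred bound at a single fixed profile rather than a comparison of two distinct maximizers, and it is precisely what keeps the loss at $\epsilon$ per conversion rather than something larger.
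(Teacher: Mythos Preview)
Your proposal is correct and follows essentially the same argument the paper uses (in the proof of \cref{thm:approx_equili_riskaversegame}, where the identical inequality is established with $h$ in place of $r$): split the true-game gap into the empirical-game gap (which vanishes by the Nash hypothesis) plus two conversion terms each bounded by $\epsilon$. The only cosmetic difference is that the paper bounds the deviation term via the subadditivity $\max_{\pi^i}(f+g)\le\max_{\pi^i}f+\max_{\pi^i}g$ directly, whereas you fix the true-game argmax $\pi^{i,\star}$ and evaluate there; both routes are standard and equivalent.
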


\cref{lem:approx_equili_normalgame} implies that if for each player, we can bound the estimation error of empirical payoff, then we can use the Nash Equilibrium of meta game as an approximation of Nash Equilibrium of the game.

\subsection{Risk averse payoff EGT}

Recall our objective is to consider risk averse payoff to evaluate strategies. Hence, instead of letting $$r^i (\pi^i, ..., \pi^p) = \mathbb{E}[R^i (\pi^1, ..., \pi^p)],$$ we choose $$h^i(\pi^i, ..., \pi^p) = \mathbb{E}[R^i (\pi^1, ..., \pi^p)] - \beta\cdot\mathbb{V}ar[R^i (\pi^1, ..., \pi^p)]$$ (where $\beta>0$) as the game payoff. Moreover, we use\begin{align}\label{eq:Risk_Averse_Payoff}
    \hat{h}^i(\pi^i, ..., \pi^p) = \bar{R^i}  - \beta\cdot \left[\frac{1}{n-1}\sum_{j=1}^{n}\left(R^i_j - \bar{R^i}\right)^2\right]
\end{align}   as meta-game payoff, where $\bar{R^i} = \frac{1}{n}\sum_{j=1}^{n}R^i_j$ and $R_j^i$ is the stochastic payoff of player $i$ in $j-$th experiment. To our knowledge, there is no previous work on empirical game theory analysis with risk sensitive payoff. Below we give the first theoretical analysis showing that for our risk-averse payoff game, we can still approximate the Nash Equilibrium by meta game.

\begin{theorem}
\label{thm:approx_equili_riskaversegame}
Under \cref{ass:stochastic_reward_bounded}, for a Normal Form Game with $p$ players, and each player $i$ chooses a strategy $\pi^i$ from a set of strategies $S^i = \{\pi^i_1, ..., \pi^i_k\}$ and receives a meta payoff $h^i(\pi^1, ..., \pi^p)$ (\cref{eq:Risk_Averse_Payoff}). If $\mathbf{x}$ is a Nash Equilibrium for the game $\hat{h}^i (\pi^1, ..., \pi^p)$, then it is a $2\epsilon$-Nash equilibrium for the game $h^i (\pi^1, ..., \pi^p)$ with probability $1-\delta$ if we play the game for  $n$ times, where 
\begin{align}
\begin{split}
     n  \ge \max
     \left\{ -\frac{8R^2}{\epsilon^2}\log\left[\frac{1}{4}\left(1-(1-\delta)^{\frac{1}{|S^1|\times ...\times |S^p|\times p}}\right)\right], \right.
     \\ 
     \left. \frac{64\beta^2\omega^2\cdot\Gamma(2)}{\epsilon^2\left[1-(1-\delta)^{\frac{1}{|S^1|\times...\times |S^p|\times p}}\right]}\right\}
\end{split}
\end{align}
\end{theorem}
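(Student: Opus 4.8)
The plan is to reduce the risk-averse claim to the uniform-closeness principle that already underlies \cref{lem:approx_equili_normalgame}. The point is that this lemma never uses the specific algebraic form of the payoff; it only needs that the empirical and true payoff functions are uniformly close. Concretely, if $\sup_{\pi,i}|\hat h^i(\pi) - h^i(\pi)| \le \epsilon$, then any Nash equilibrium of the game $\hat h$ is a $2\epsilon$-Nash equilibrium of the game $h$. So it suffices to show that the stated lower bound on $n$ forces $\sup_{\pi,i}|\hat h^i(\pi)-h^i(\pi)|\le\epsilon$ with probability at least $1-\delta$; the factor $2\epsilon$ in the conclusion then comes for free by invoking \cref{lem:approx_equili_normalgame} with the risk-averse payoffs in place of the risk-neutral ones.

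First I would fix a single (strategy profile, player) pair $(\pi,i)$ and split the payoff error into a mean part and a variance part. Writing $\hat V^i = \frac{1}{n-1}\sum_{j}(R^i_j-\bar R^i)^2$ for the sample variance, we have
\begin{align*}
\hat h^i(\pi)-h^i(\pi) = \big(\bar R^i - \expectation[R^i]\big) - \beta\big(\hat V^i - \Var[R^i]\big),
\end{align*}
so by the triangle inequality it is enough to push each piece below $\epsilon/2$: namely $|\bar R^i-\expectation[R^i]|\le \epsilon/2$ and $\beta\,|\hat V^i-\Var[R^i]|\le\epsilon/2$. This split is exactly what produces the two terms inside the final $\max$.

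Next I would bound the two pieces by different tools. For the mean part, the boundedness of the stochastic reward from \cref{ass:stochastic_reward_bounded} (say $|R^i|\le R$) lets me apply Hoeffding's inequality, giving $\probability(|\bar R^i-\expectation[R^i]|>\epsilon/2)\le 2\exp(-n\epsilon^2/(8R^2))$; equating this to the allotted per-entry budget and solving for $n$ yields the first term of the $\max$, with its $\log[\tfrac14(\cdot)]$ factor. For the variance part I cannot use Hoeffding directly, since $\hat V^i$ is a degree-two statistic whose summands are coupled through $\bar R^i$; instead I would control its fluctuation by a second-moment (Chebyshev) argument, using that $\Var(\hat V^i)$ decays like $1/n$ with a constant governed by the fourth central moment of the reward, which is finite under \cref{ass:stochastic_reward_bounded}. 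Packaging that constant (via a bound of the form $\Var(\hat V^i)\le 8\omega^2\Gamma(2)/n$) and applying Chebyshev to $\beta\,|\hat V^i-\Var[R^i]|>\epsilon/2$ produces the second term of the $\max$, explaining both the $\Gamma(2)$ factor and the $1/\big(1-(1-\delta)^{1/m}\big)$ dependence with $m=|S^1|\times\cdots\times|S^p|\times p$.

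Finally I would stitch the $m$ payoff entries (one per player per strategy profile) together by allocating the total failure probability $\delta$ across them: requiring each entry to succeed with probability $(1-\delta)^{1/m}$ leaves a per-entry budget $q=1-(1-\delta)^{1/m}$, which is then split evenly between the mean and variance events (giving $q/2$ each, hence the $q/4$ inside the logarithm after the two-sided Hoeffding factor). Taking $n$ to be the maximum of the two resulting thresholds makes every mean and variance event hold simultaneously, so $\sup_{\pi,i}|\hat h^i-h^i|\le\epsilon$ with probability $\ge 1-\delta$, and the theorem follows. The \emph{hard part} is the variance piece: because the empirical variance is not an average of independent bounded terms, obtaining the clean $1/n$ rate with an explicit $\omega^2\Gamma(2)$ constant — rather than a crude bound — is the delicate step, and it is precisely what forces the weaker (polynomial in $1/\delta$) second term instead of a logarithmic one.
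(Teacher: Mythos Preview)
Your proposal is correct and follows essentially the same route as the paper: reduce to uniform closeness $\sup_{\pi,i}|\hat h^i-h^i|\le\epsilon$ (the paper re-derives the $2\epsilon$-Nash implication inline rather than citing \cref{lem:approx_equili_normalgame}, but the argument is identical), split via the triangle inequality into a mean piece handled by Hoeffding and a variance piece handled by Chebyshev with an explicit formula $\Var(V_n^2)=\mu_4/n-\sigma^4(n-3)/(n(n-1))$ and the sub-Gaussian moment bound $\mu_4\le 16\omega^2\Gamma(2)$, then combine over the $m=|S^1|\times\cdots\times|S^p|\times p$ entries multiplicatively via $(1-f(n,\epsilon))^m\ge 1-\delta$. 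The only cosmetic discrepancy is the bookkeeping of the per-entry budget $q=1-(1-\delta)^{1/m}$: the paper effectively sets the Hoeffding term $\le q/2$ and the Chebyshev term $\le q$ (rather than the even $q/2$--$q/2$ split you describe), which is what produces the constant $64$ rather than $128$ in the second term.
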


\section{Experiments}
\subsection{Setup}

Our experiments use the open-sourced ABIDES~\cite{byrd2019abides} market simulator in a simplified setting. The environment is generated by replaying publicly available real trading data for a single stock ticker.\footnote{https://lobsterdata.com/info/DataSamples.php} The setting is composed of one non-learning agent that replays the market deterministically~\cite{balch2019evaluate} and learning agents. The learning agents considered are: RAQL, \RAA, \RAAV, and \RAAA. 

We follow a similar setting to existing implementations in ABIDES\footnote{https://github.com/abides-sim/abides/blob/\\master/agent/examples/QLearningAgent.py} where the state space is defined by two features: current holdings and volume imbalance. Agents take one action at every time step (every second) selecting among: \emph{buy/sell} with limit price $ base + i\cdot K$, where $i \in\{1,2,..,6\}$ or \emph{do nothing}. The immediate reward is defined by the change in the value of our portfolio (mark-to-market) and comparing against the previous time step. Our comparisons are in terms of Sharpe ratio,\footnote{We could have compared in terms of the objectives functions (e.g., \cref{eq:RAA_Q_Update}) but instead we used Sharpe ratio which is more common in practice.} which is a widely used measure in trading markets. 

\subsection{Risk and robustness evaluation}
\label{sec:risk_and_robustness_evaluation}

\begin{table}[]
\scriptsize
    \caption{Meta-payoff of 2 players, 3 strategies, respectively RAQL~\cite{shen2014risk}, \RAA and \RAAV over 80 simulations. The return here used is Sharpe Ratio.}
    \label{table:MetaPayoff_oneAgentAlgos}
    \centering
    \begin{tabular}{c c c|c c c}
        \toprule
       $N_{i1}$  & $N_{i2}$ & $N_{i3}$ & $R_{i1}$ & $R_{i2}$ & $R_{i3}$\\ \hline
       2 & 0 & 0 &0.9130 & 0 & 0\\
       1  &  1 & 0 & 0.7311 & 0.7970 & 0\\
       0 & 2 & 0 & 0 &1.0298 & 0  \\
       1 & 0 & 1 & 0.6791 & 0 & 1.0786\\
       0 & 0 & 2 & 0 & 0 & 2.2177 \\
       0 & 1 & 1 & 0 & 0.7766  & 1.4386\\
       \bottomrule
    \end{tabular}
\end{table}
\begin{figure}
    \centering
    \subfigure[]{
    \includegraphics[width = 3.85cm]{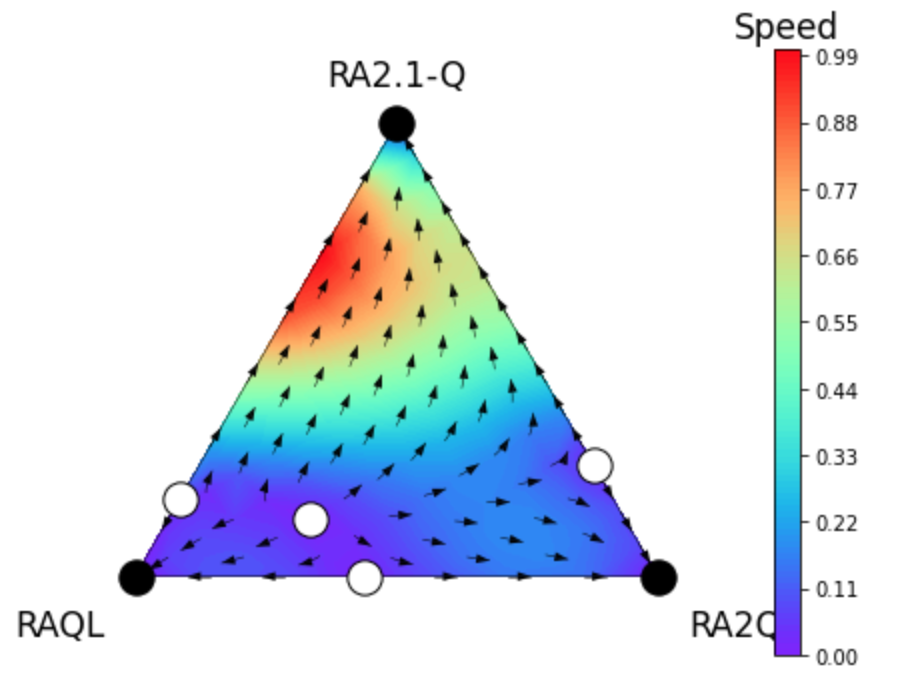}
     }
    \subfigure[]{
    \includegraphics[width = 3.85cm]{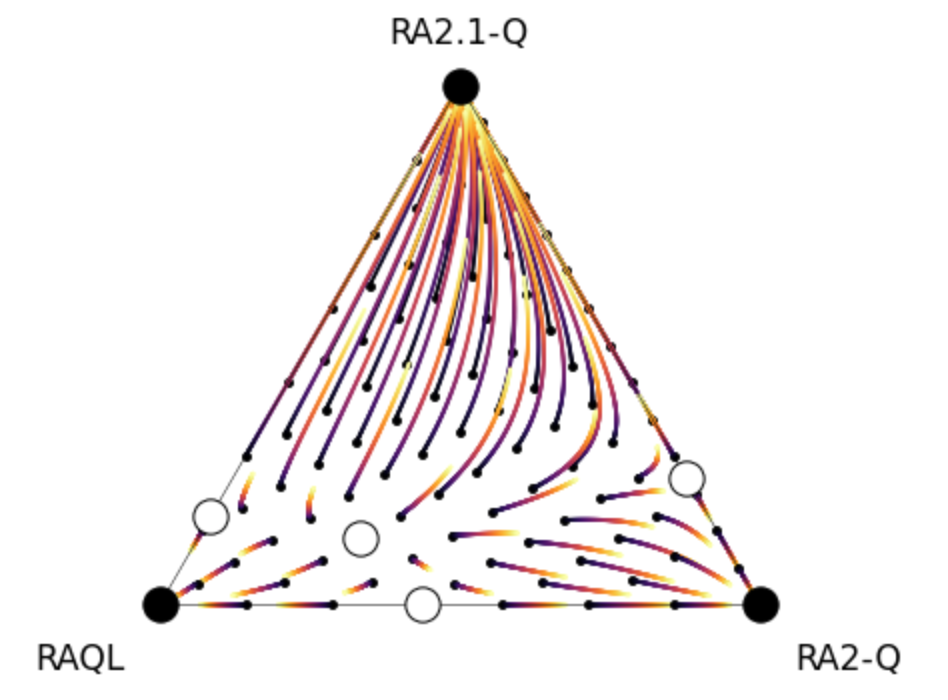}
    }
      \caption{(a) Directional field plot and (b) Trajectory plot of the simplex of 3 strategies based on the meta-game payoff from \cref{table:MetaPayoff_oneAgentAlgos}. It can be seen that \RAAV (top) is the the strongest attractor. White circles represent equilibria.}
    \label{fig:DirectionalFieldRiskAverseAlgo}
 
\end{figure}

\cref{table:MetaPayoff_oneAgentAlgos} shows the meta-payoff table of a two player-game among three strategies: RAQL, \RAA and \RAAV. The results show that our two proposed algorithms \RAA and \RAAV obtained better results than RAQL. With those payoffs we obtained the directional and trajectory plots shown in \cref{fig:DirectionalFieldRiskAverseAlgo}, where black solid circles denote globally-stable equilibria, and the white circles denote unstable equilibria (saddle-points), in (a) the plot is colored according to the speed at which the strategy mix is changing at each point; in (b) the lines show trajectories for some points over the simplex.

\begin{table}[t]
\scriptsize
    \caption{Comparison in terms for Sharpe ratio with two types of perturbations: The trained adversary from \RAAA is used in testing time. Zero-intelligence agents are added to the simulation to perturb the market. \RAAA obtains better results in both cases due to its enhanced robustness.}\label{table:RA2Q_RA3Q}
    \centering
    \begin{tabular}{c | c | c}
        \toprule
       Algorithm/Setting & Adversarial Perturbation & ZI Agents Perturbation\\
        \hline
      \RAA   & 0.5269 & 0.9538\\
      \RAAA & 0.9347 & 1.0692\\
      
       \bottomrule
    \end{tabular}
\end{table}

Our last experiment compares \RAA and \RAAA in terms of robustness. In this setting we trained both agents under the same conditions as a first step. Then in testing phase we added two types of perturbations, one adversarial agent (trained within \RAAA) or adding noise (aka. zero-intelligence) agents in the environment. In both cases, the agents will act in a perturbed environment. The results are presented in \cref{table:RA2Q_RA3Q} in terms of Sharpe ratio using cross validation with 80 experiments.

\section{Discussion}
\label{sec:Discussion}

Here we briefly discuss some trade-offs between practical and theoretical results about our proposed algorithms.

As mentioned in \cref{sec:RAAV}, we did not show that \cref{alg:Variance_Reduced_RAQL} (\RAAV) has a convergence guarantee, however, it obtained good empirical results (better than RAQL and \RAA). It is an open question whether \RAAV converges to the optimal of \cref{eq:Risk_Averse_Objective}, furthermore, it could be interesting to study whether it also enjoys minimax optimality convergence rate up to a logarithmic factor as in \cite{wainwright2019variance}. Similarly, \RAAA does not have a convergence guarantee in the multi-agent learning scenario (when protagonist and adversary are learning simultaneously). However, \RAAA obtained better empirical results than \RAA highlighting its robustness. In \cref{sec:Discussion_of_RAAA} we show a related result showing that \cref{eq:RA3QProtagonist_UpdateRule} or \cref{eq:RA3QAdversary_UpdateRule} converge to optimal assuming the policy for the adversary (or protagonist) is fixed (thus, it is no longer a multi-agent learning setting).

On the side of EGT analysis, previous works used average as payoff~\cite{tuyls2020bounds} and our work considers a risk-averse measure based on variance (second moment), studying higher moments and other measures is one interesting open question.

\section{Conclusions}

We have proposed 4 different Q-learning style algorithms that augment reinforcement learning agents with risk-awareness, variance reduction, and robustness. \RAA and \RAAV are risk-averse but use slightly different techniques to reduce variance. \RAAM and \RAAA are two proposals that extend by adding an adversarial learning layer which is expected to improve its robustness. On the one side, our theoretical results show convergence results for \RAA and \RAAM, on the other side, in our empirical results \RAAV and \RAAA obtained better results in a simplified trading scenario. Lastly, we contributed with risk-averse analysis of our algorithms using empirical game theory. As future work we want to perform a more extensive set of experiments to evaluate the algorithms under different conditions.

\bibliography{all}
\bibliographystyle{icml2021}

\appendix
\onecolumn
\renewcommand{\thesubsection}{\Alph{subsection}}
\renewcommand{\thelemma}{\Alph{subsection}.\arabic{lemma}}
\renewcommand{\theproposition}{\Alph{subsection}.\arabic{proposition}}
\renewcommand{\theassumption}{\Alph{subsection}.\arabic{assumption}}
\begin{center}
\LARGE \textbf{Appendix}
\end{center}

\subsection{Risk-Averse Q-Learning (RAQL) and proof of convergence}

\begin{algorithm}[ht]
\scriptsize
\caption{Risk-Averse Q-Learning (RAQL)~\cite{shen2014risk}}
\label{alg:Risk_Averse_QLearning}
\begin{algorithmic}[1]
\STATE For $\forall (s,a)$, initialize $Q(s,a) = 0$; $N(s,a) = 0$.
\FOR{$t=1$ to $T$}
\STATE At state $s_t$, choose action according to the $\epsilon$-greedy strategy.
\STATE Observe $s_t, a_t, r_t, s_{t+1}$
\STATE $N(s_t, a_t) = N(s_t, a_t) +1$
\STATE Set learning rate $\alpha_t = \frac{1}{N(s_t, a_t)}$
\STATE Update Q : 
\begin{align}\label{eq:Utility_Update_Rule}
    &Q_{t+1}(s_t, a_t) = Q_{t}(s_t, a_t) + \alpha_t (s_t, a_t)\cdot \left[u\left(r_t + \gamma~\cdot \underset{a}{\max}Q_t(s_{t+1},a) - Q_t (s_t, a_t)\right) -x_0\right]
\end{align}
where $u$ is a utility function, here we use $u(x) = -e^{\beta x}$ where $\beta<0$; $x_0 = -1$
\ENDFOR 
\STATE \textbf{Return} Q.
\end{algorithmic}
\end{algorithm}
\subsubsection{Proof of \cref{thm:RARL_Converge}}\label{sec:proof_RARL}
This proof is originally proved by \cite{shen2014risk}, but we describe it here in detail  because it will be useful for later proofs for our proposed algorithms.

First, we show the following Lemma : \begin{lemma}\label{lem:Convergence_of_RiskAverse_UpdateRule}For the iterative procedure \begin{align}
    Q_{t+1}(s_t, a_t) = Q_t(s_t,a_t)+\alpha_t(s_t,a_t)\left[u\left(r_t+\gamma\cdot\underset{a}{\max}Q_t(s_{t+1},a)-Q_t(s_t,a_t)\right)-x_0\right]
\end{align}
where $\alpha_t\geq 0$ satisfy that for any $(s,a)$, $\sum_{t=0}^{\infty}\alpha_t(s,a)=\infty$; and $\sum_{t=0}^{\infty}\alpha^2_t(s,a)<\infty$, then $Q_t\rightarrow Q^*$, where $Q^*$ is the solution of the Bellman equation\begin{align}
    (H^A Q^*)(s,a) = \alpha\cdot\expectation_{s,a}\left[\tilde{u}\left(r_t+\gamma\cdot\underset{a}{\max}Q^*(s_{t+1},a)-Q^*(s,a)\right)\right]+Q^*(s,a) = Q^*(s,a)\qquad \forall (s,a)
\end{align}
\end{lemma}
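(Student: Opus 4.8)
The plan is to recognize this as a standard stochastic-approximation (Robbins--Monro) statement for a contractive operator and to reduce it to the classical asynchronous stochastic-approximation lemma (in the style of Jaakkola, Jordan, and Singh, or Tsitsiklis), which guarantees that an iteration of the form $\Delta_{t+1}(x) = (1-\alpha_t(x))\Delta_t(x) + \alpha_t(x)F_t(x)$ converges to $0$ w.p.\ 1 provided: (i) the step sizes obey $\sum_t \alpha_t(x) = \infty$ and $\sum_t \alpha_t^2(x) < \infty$; (ii) the conditional mean of $F_t$ is a sup-norm contraction toward $0$, i.e.\ $\|\expectation[F_t \mid \mathcal{F}_t]\|_\infty \le \rho\,\|\Delta_t\|_\infty$ for some $\rho<1$; and (iii) the conditional variance of $F_t$ grows at most quadratically, $\operatorname{Var}[F_t \mid \mathcal{F}_t] \le C(1 + \|\Delta_t\|_\infty^2)$. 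Condition (i) is exactly the hypothesis on $\alpha_t$, so the proof reduces to casting the update into this template and verifying (ii) and (iii).

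First I would set $\Delta_t(s,a) = Q_t(s,a) - Q^*(s,a)$ and rewrite the update as $\Delta_{t+1}(s_t,a_t) = (1-\alpha_t)\Delta_t(s_t,a_t) + \alpha_t F_t(s_t,a_t)$ with
\[
F_t(s_t,a_t) = u\!\left(r_t + \gamma\max_a Q_t(s_{t+1},a) - Q_t(s_t,a_t)\right) - x_0 + Q_t(s_t,a_t) - Q^*(s_t,a_t).
\]
Taking the conditional expectation over $s_{t+1}\sim\gT[\cdot\mid s_t,a_t]$ and using the defining fixed-point equation of $Q^*$, I would identify $\expectation[F_t(s,a)\mid\mathcal{F}_t] = (H^A Q_t)(s,a) - Q^*(s,a) = (H^A Q_t)(s,a) - (H^A Q^*)(s,a)$, where $H^A$ is the operator in the statement (its fixed-point equation coincides with the characterization of $Q^*$, the residual $-x_0$ being folded into the centered utility). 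Thus (ii) becomes precisely the claim that $H^A$ is a $\rho$-contraction in the sup-norm.

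The hard part will be establishing this contraction, since $u(x) = -e^{\beta x}$ has derivative $u'(x) = -\beta e^{\beta x}$ which is unbounded over all of $\mathbb{R}$. I would first argue that the iterates stay in a bounded region: because the rewards are bounded, an induction on the update keeps each $Q_t$ (hence every TD argument) inside a fixed compact interval, on which $u'$ is sandwiched between two positive constants $0 < c_1 \le u' \le c_2$. Writing $(H^A Q_1 - H^A Q_2)(s,a)$, applying the mean value theorem to the difference of the two TD terms, and using nonexpansiveness $|\max_a Q_1(s',a) - \max_a Q_2(s',a)| \le \|Q_1-Q_2\|_\infty$, I would obtain $\|H^A Q_1 - H^A Q_2\|_\infty \le (\gamma\bar{c} + |1-\bar{c}|)\,\|Q_1-Q_2\|_\infty$ for some $\bar c\in[c_1,c_2]$. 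This factor is $<1$ under the utility-function assumptions of Shen et al.: when $\bar c \le 1$ it equals $1 - \bar c(1-\gamma) \le 1 - c_1(1-\gamma) < 1$, and the regime $\bar c>1$ is controlled by requiring $c_2 < 2/(1+\gamma)$. This contraction yields (ii) and, via the Banach fixed-point theorem, also the uniqueness of $Q^*$.

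Finally, condition (iii) follows routinely: on the bounded domain $u$ is Lipschitz and bounded and the rewards have bounded variance, so $\operatorname{Var}[F_t\mid\mathcal{F}_t] \le C(1+\|\Delta_t\|_\infty^2)$. With (i)--(iii) verified, the stochastic-approximation lemma gives $\Delta_t \to 0$, i.e.\ $Q_t \to Q^*$ w.p.\ 1. The single genuine obstacle is the contraction step, and within it the boundedness of the iterates, which is what legitimizes replacing the globally unbounded $u'$ by finite constants $c_1,c_2$.
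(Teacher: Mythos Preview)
Your high-level plan---cast the update as asynchronous stochastic approximation and verify (i) the step-size conditions, (ii) a sup-norm contraction for the conditional mean, and (iii) a quadratic bound on the conditional variance---is exactly the architecture the paper uses (it invokes the Bertsekas/Tsitsiklis proposition in this form). So the skeleton is right.

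The gap is in the contraction step, and it is the one place where the paper does something you are missing. You write $F_t=\tilde u(d_t)+\Delta_t$ and identify $\expectation[F_t\mid\mathcal F_t]$ with $H^A Q_t-H^A Q^*$; but $H^A$ in the statement carries an auxiliary scalar $\alpha$, and your $F_t$ corresponds to $\alpha=1$. The paper's actual move is to \emph{rescale the step sizes}: it rewrites the update as
\[
q_{t+1}=\Bigl(1-\tfrac{\alpha_t}{\alpha}\Bigr)q_t+\tfrac{\alpha_t}{\alpha}\bigl[\alpha\,\tilde u(d_t)+q_t\bigr],
\]
for a fixed $0<\alpha<\min(L^{-1},1)$, where $0<\epsilon\le u'\le L$ are \emph{assumed} global slope bounds on $u$ (the exponential utility is handled by an explicit truncation/approximation, not by proving iterate boundedness). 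With this rescaling the mean-value computation gives a factor $1-\alpha\xi(1-\gamma)$ with $\xi\in[\epsilon,L]$; since $\alpha\xi<1$ always, one obtains the clean contraction constant $1-\alpha\epsilon(1-\gamma)<1$, with no need for your extra hypothesis $c_2<2/(1+\gamma)$.

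Your alternative route---bound the iterates by induction from bounded rewards, extract local Lipschitz constants $c_1,c_2$, then impose $c_2<2/(1+\gamma)$---does not go through under the paper's standing assumptions: rewards are only sub-Gaussian (hence possibly unbounded), so the induction for a priori boundedness fails as stated, and the condition on $c_2$ is an unjustified add-on. The variance check (iii) in the paper likewise uses sub-Gaussian moment bounds rather than bounded rewards. In short, the missing idea is the $\alpha$-rescaling of the step sizes, which is precisely what decouples the contraction from any upper bound on the slope of $u$.
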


If \cref{lem:Convergence_of_RiskAverse_UpdateRule} holds, then it's shown in \cite{shen2014risk} that the corresponding policy optimizes the objective function \cref{eq:Risk_Averse_Objective}.

\subsubsection{Proof of \cref{lem:Convergence_of_RiskAverse_UpdateRule}}\label{sec:proof_of_update_operator_converge}
Before proving the convergence, we consider a more general update rule \begin{align}\label{eq:update_rule_1}
    q_{t+1}(i) = (1-\alpha_t (i))q_t(i) + \alpha_t(i)\left[(Hq_t)(i) + w_t(i)\right]
\end{align} 
where $i$ is the independent variable (e.g., in single agent Q learning, it's the state-action pair $(s,a)$), $q_t\in\sR^d$, $H:\sR^d \rightarrow \sR^d$ is an operator, $w_t$ denotes some random noise term and $\alpha_t$ is learning rate with the understanding that $\alpha_t(i) = 0$ if $q(i)$ is not updated at time $t$. Denote by $\gF_t$ the history of the algorithm up to time $t$,\begin{align}
    \gF_t = \{q_0(i),...,q_t(i),w_0(i),...,w_{t}(i),\alpha_0(i),...,\alpha_{t}(i)\}
\end{align}

Recall the following essential proposition : \begin{proposition}\cite{Bertsekas2009}\label{prop:convergence_of_contraction}
Let $q_t$ be the sequence generated by the iteration \cref{eq:update_rule_1}, if we assume the following hold : \begin{enumerate}[label=(\alph*)]
    \item The Learning rates $\alpha_t(i)$ satisfy :
    \begin{align}
        \alpha_t (i) \geq 0;\qquad \sum_{t=0}^{\infty}\alpha_t(i) = \infty; \qquad \sum_{t=0}^{\infty}\alpha_t^2(i) < \infty;\quad \forall i
    \end{align}
    \item The noise terms $w_t(i)$ satisfy\begin{enumerate}[label=(\roman*)]
        \item $\expectation[w_t(i)|\gF_t] = 0$ for all $i$ and $t$;
        \item There exist constants $A$ and $B$ such that $\expectation[w_t^2(i)|\gF_t]\leq A+B\left\|q_t\right\|^2$ for some norm $\left\|\cdot\right\|$ on $\sR^d$.
    \end{enumerate}  
    \item The mapping $H$ is a contraction under sup-norm.
\end{enumerate} 
Then $q_t$ converges to the unique solution $q^*$ of the equation $Hq^* = q^*$ with probability 1.
\end{proposition}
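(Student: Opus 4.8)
Since this Proposition is quoted verbatim from \cite{Bertsekas2009}, my plan is not to reprove it from scratch but to lay out the standard asynchronous stochastic-approximation argument that establishes it, so the reader sees why hypotheses (a)--(c) are precisely what is required. The starting point is to pass to the error process. Uniqueness of the fixed point $q^*$ is immediate from (c) (a maximum-norm contraction has at most one fixed point). Writing $\Delta_t(i) = q_t(i) - q^*(i)$ and subtracting $q^*(i) = (Hq^*)(i)$ from the recursion \cref{eq:update_rule_1}, I obtain
\begin{align*}
\Delta_{t+1}(i) = (1-\alpha_t(i))\Delta_t(i) + \alpha_t(i)\left[(Hq_t)(i) - (Hq^*)(i) + w_t(i)\right].
\end{align*}
Condition (c) then supplies the pointwise bound $|(Hq_t)(i)-(Hq^*)(i)| \le \rho\,\|\Delta_t\|_\infty$ for the contraction modulus $\rho\in[0,1)$, and this is the engine that pulls $\Delta_t$ toward zero in between noise fluctuations.

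The proof would then proceed in two stages. First I would show the iterates remain bounded, $\sup_t \|q_t\|_\infty < \infty$ almost surely. This is where the variance bound (b)(ii), namely $\expectation[w_t^2(i)\mid\gF_t]\le A + B\|q_t\|^2$, combines with the square-summability $\sum_t \alpha_t^2(i)<\infty$: one controls the accumulated noise through a martingale argument and couples it to the contraction to rule out escape to infinity (a rescaling/stopping-time argument). Second, conditioned on boundedness, I would run the iterated-contraction scheme. The zero-conditional-mean property (b)(i) makes the weighted noise increments a martingale-difference sequence whose cumulative effect vanishes because $\sum_t \alpha_t^2 < \infty$, while $\sum_t \alpha_t = \infty$ guarantees every component is updated often enough for the factor $\rho$ to act. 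Alternating these facts produces a decreasing sequence of bounding boxes $\|\Delta_t\|_\infty \le D_k$ satisfying roughly $D_{k+1}\le \rho D_k + (\text{vanishing noise remainder})$; since the remainder tends to zero and $\rho<1$, we get $D_k\to 0$ and hence $\|\Delta_t\|_\infty\to 0$ a.s., i.e.\ $q_t\to q^*$.

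The main obstacle is that the contraction in (c) is in the \emph{maximum} norm rather than a Hilbert-space norm, so the classical Robbins--Monro/ODE machinery does not apply directly; the asynchronous, componentwise step sizes (with $\alpha_t(i)=0$ whenever component $i$ is not updated at time $t$) compound this difficulty. The technical heart is therefore the boundedness step together with the careful bookkeeping of the successive bounding boxes under max-norm contraction and martingale noise whose conditional variance is controlled only by $A+B\|q_t\|^2$. Because all of this is already established in \cite{Bertsekas2009}, the sensible path for the paper is to invoke the Proposition as stated and instead spend its effort verifying (a)--(c) for the risk-averse operator appearing in the proof of \cref{lem:Convergence_of_RiskAverse_UpdateRule}, which is the actual point at which this Proposition is used.
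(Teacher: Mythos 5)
Your treatment matches the paper exactly: the paper does not prove this proposition either, but invokes it directly from \cite{Bertsekas2009} and spends its effort verifying hypotheses (a)--(c) for the risk-averse update operator in \cref{lem:Convergence_of_RiskAverse_UpdateRule}, precisely as you recommend. Your sketch of the underlying argument (uniqueness from the sup-norm contraction, almost-sure boundedness via the conditional variance bound and square-summable step sizes, then the cascading bounding-box recursion $D_{k+1}\le \rho D_k + o(1)$ with martingale-difference noise) is a faithful outline of the standard Bertsekas--Tsitsiklis proof, so there is no gap to flag.
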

In order to apply \cref{prop:convergence_of_contraction}, we reformulate the update rule \cref{eq:RAA_Q_Update} by letting 
\begin{align}
q_{t+1}(s,a) = \left(1-\frac{\alpha_t(s,a)}{\alpha}\right)q_t(s,a)+\frac{\alpha_t(s,a)}{\alpha}[\alpha\cdot u(d_t) -\alpha\cdot x_0 + q_t(s,a)]
\end{align}

where $\tilde{u}(x) := u(x)-x_0$; $d_t := r_t + \gamma\cdot\underset{a}{\max}q_t(s_{t+1},a) - q_t(s,a)$. And we set \begin{align}\label{eq:def_of_contraction}
    (Hq_t)(s,a) &= \alpha\cdot\expectation_{s,a}\left[\tilde{u}\left(r_t+\gamma\cdot\underset{a}{\max}q_t(s_{t+1},a)-q_t(s,a)\right)\right]+q_t(s,a)\\
    w_t(s,a) &= \alpha\cdot\tilde{u}(d_t)-\alpha\cdot\expectation_{s,a}\left[\tilde{u}(r_t+\gamma\cdot\underset{a}{\max}q_t(s^{\prime},a)-q_t(s,a))\right] 
\end{align}
where $s^{\prime}$ is sampled from $\gT [\cdot | s,a]$.

More explicitly, $Hq$ is defined as \begin{align}
    (Hq)(s,a) = \alpha\cdot\sum_{s^{\prime}}\gT[s^{\prime}|s,a]\cdot\tilde{u}\left(r(s,a)+\gamma\cdot\underset{a^{\prime}}{\max}\;q(s^{\prime},a^{\prime})-q(s,a)\right)+q(s,a)
\end{align}
Next, we show that $H$ is a contraction under sup-norm. 

Note that we assume the utility function satisfy :
\begin{assumption}
\label{ass:Utility_Func_Assumption}
\begin{enumerate}
[label=(\roman*)]
    \item The utility function $u$ is strictly increasing and there exists some $y_0\in\sR$ such that $u(y_0) = x_0$.
    \item There exist positive constants $\epsilon, L$ such that $0<\epsilon\leq \frac{u(x)-u(y)}{x-y}\leq L$ for all $x\ne y\in\sR$.
\end{enumerate}
\end{assumption}

Note that \cref{ass:Utility_Func_Assumption} seems to exclude several important types of utility functions like the exponential function $u(x) = exp(c\cdot x)$ since it does not satisfy the global Lipschitz. But this can be solved by a truncation when $x$ is very large and by an approximation when $x$ is very close to 0. For more details see \citeauthor{shen2014risk}~(2014). 

And we also assume that the immediate reward $r_t$ always satisfy a sub-Gaussian tail assumption. This allows the reward to be unbounded, which is closer to practical settings with tail events, for example, in financial markets.
:\begin{assumption}
$r_t$ is uniformly sub-Gaussian over $t$ with variance proxy $\sigma^2$, i.e., \begin{align}
    \expectation[r_t] &= 0\\
    \expectation[exp(c\cdot r_t)]&\le exp\left(\frac{\sigma^2 c^2}{2}\right)\qquad \forall c\in \sR
\end{align}
\label{ass:immediate_reward_bound}
\end{assumption}
The above uniform sub-Gaussian assumption is equivalent to the following form, commonly seen in statistics and machine learning: there exists $C > 0, \alpha$ such that for every $K > 0$ and every $r_t$, we have:
\begin{align}
    \mathbb{P} (|r_t| > K) \leq C e^{-\alpha K^2}
\end{align}

\begin{proposition}\label{prop:H_is_contraction}
Suppose that \cref{ass:Utility_Func_Assumption} and \cref{ass:immediate_reward_bound} hold and $0<\alpha<\min (L^{-1},1)$. Then there exists a real number $\bar{\alpha}\in[0,1)$ such that for all $q,q^{\prime}\in\sR^d$, $\left\|Hq-Hq^{\prime}\right\|_{\infty}\leq \bar{\alpha}\left\|q-q^{\prime}\right\|_{\infty}$.
\end{proposition}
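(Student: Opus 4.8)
The plan is to prove the contraction directly from the explicit formula for $H$, exploiting two facts: the shared reward cancels inside the argument of $\tilde u$, and the slope of $u$ is pinched between $\epsilon$ and $L$ by Assumption~\ref{ass:Utility_Func_Assumption}(ii). Fix $q,q'\in\sR^d$, write $M=\left\|q-q'\right\|_{\infty}$ and $\Delta(s,a)=q(s,a)-q'(s,a)$, and for each successor $s'$ set $d_q(s')=r(s,a)+\gamma\max_{a'}q(s',a')-q(s,a)$ and likewise $d_{q'}(s')$. First I would subtract the two operator values pointwise:
\begin{align*}
(Hq)(s,a)-(Hq')(s,a)=\alpha\sum_{s'}\gT[s'|s,a]\left[\tilde u(d_q(s'))-\tilde u(d_{q'}(s'))\right]+\Delta(s,a),
\end{align*}
where the $x_0$ offsets cancel since $\tilde u=u-x_0$, and the common reward $r(s,a)$ cancels inside the argument difference.

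Second, I would linearize $u$. By Assumption~\ref{ass:Utility_Func_Assumption}(ii) the difference quotient $\frac{u(d_q(s'))-u(d_{q'}(s'))}{d_q(s')-d_{q'}(s')}$ lies in $[\epsilon,L]$; call it $c_{s'}$ (with $c_{s'}\in[\epsilon,L]$ chosen arbitrarily when the two arguments coincide), so that $\tilde u(d_q(s'))-\tilde u(d_{q'}(s'))=c_{s'}\bigl(d_q(s')-d_{q'}(s')\bigr)$. Because the reward cancels,
\begin{align*}
d_q(s')-d_{q'}(s')=\gamma\,\delta_{s'}-\Delta(s,a),\qquad \delta_{s'}:=\max_{a'}q(s',a')-\max_{a'}q'(s',a').
\end{align*}
Non-expansiveness of the max gives $|\delta_{s'}|\le M$, and trivially $|\Delta(s,a)|\le M$.

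Third, I would collect terms. Setting $\bar c=\sum_{s'}\gT[s'|s,a]\,c_{s'}\in[\epsilon,L]$ (a convex combination of the $c_{s'}$), we obtain
\begin{align*}
(Hq)(s,a)-(Hq')(s,a)=\alpha\gamma\sum_{s'}\gT[s'|s,a]\,c_{s'}\,\delta_{s'}+(1-\alpha\bar c)\,\Delta(s,a).
\end{align*}
The hypothesis $0<\alpha<L^{-1}$ forces $\alpha\bar c\le\alpha L<1$, so $1-\alpha\bar c\in(0,1)$, and the triangle inequality with $|\delta_{s'}|,|\Delta(s,a)|\le M$ gives
\begin{align*}
|(Hq)(s,a)-(Hq')(s,a)|\le\left[\alpha\gamma\bar c+(1-\alpha\bar c)\right]M=\left[1-\alpha\bar c(1-\gamma)\right]M\le\left[1-\alpha\epsilon(1-\gamma)\right]M.
\end{align*}
Taking $\bar\alpha=1-\alpha\epsilon(1-\gamma)$, which lies in $[0,1)$ because $\gamma<1$ and $\alpha\epsilon\le\alpha L<1$, and taking the supremum over $(s,a)$ finishes the argument.

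The genuinely nontrivial point, and the main obstacle, is securing the cancellation in the third step. A careless estimate that bounds the cross term with $L$ and the self term with $\epsilon$ produces $1-\alpha\epsilon+\alpha\gamma L$, which need not be below $1$ unless $\gamma L<\epsilon$; the trick is to carry the \emph{same} slope factors $c_{s'}$ through both terms so the transition weights $\gT[\cdot|s,a]$ recombine into the single discount-weighted modulus $1-\alpha\bar c(1-\gamma)$. Finally, Assumption~\ref{ass:immediate_reward_bound} is what guarantees the expectation defining $(Hq)(s,a)$ is finite, so that $H$ is a genuine self-map of $\sR^d$; in the exact exponential-utility case, where Assumption~\ref{ass:Utility_Func_Assumption}(ii) holds only after truncation, the sub-Gaussian tail additionally controls the region in which the slope $c_{s'}$ is evaluated.
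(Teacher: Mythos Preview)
Your proof is correct and follows essentially the same route as the paper's: both linearize $\tilde u$ via a slope factor in $[\epsilon,L]$ (the paper calls it $\xi_{(s,a,s',q,q')}$, you call it $c_{s'}$), exploit the cancellation of the reward inside the argument difference, combine the $\gamma\delta_{s'}$ and $-\Delta(s,a)$ pieces with the \emph{same} slope weights so that the transition averages recombine into a single factor, and arrive at the identical contraction modulus $\bar\alpha=1-\alpha\epsilon(1-\gamma)$. Your write-up is somewhat more explicit about the absolute-value bookkeeping and about why the naive estimate $1-\alpha\epsilon+\alpha\gamma L$ must be avoided, but the underlying argument is the same.
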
 
\begin{proof}
Define $v(s):=\underset{a}{\max}~q(s,a)$ and $v^{\prime}(s):= \underset{a}{\max}~q^{\prime}(s,a)$. Thus, \begin{align}
    |v(s)-v^{\prime}(s)|\leq \underset{s,a}{\max}|q(s,a)-q^{\prime}(s,a)| = \left\|q-q^{\prime}\right\|_{\infty}
\end{align}
By \cref{ass:Utility_Func_Assumption}, and the monotonicity of $\tilde{u}$, there exists a $\xi_{(x,y)}\in[\epsilon, L]$ such that $\tilde{u}(x)-\tilde{u}(y) = \xi_{(x,y)}\cdot(x-y)$. Then we can obtain \begin{align}
    &(Hq)(s,a) - (Hq^{\prime})(s,a) \\
    &= \sum_{s^{\prime}}\gT[s^{\prime}|s,a]\cdot\Big\{\alpha\xi_{(s,a,s^{\prime},q,q^{\prime})}\cdot\left[\gamma v(s^{\prime})-\gamma v^{\prime}(s^{\prime}) -     q(s,a)+q^{\prime}(s,a)\right]+(q(s,a)-q^{\prime}(s,a))\Big\}\\
    &\leq \left(1-\alpha(1-\gamma)\sum_{s^{\prime}}\gT[s^{\prime}|s,a]\cdot\xi_{(s,a,s^{\prime},q,q^{\prime})}\right)\left\|q-q^{\prime}\right\|_{\infty}\\
    &\leq (1-\alpha(1-\gamma)\epsilon)\left\|q-q^{\prime}\right\|_{\infty}
\end{align}
Hence, $\bar{\alpha} = 1-\alpha(1-\gamma)\epsilon$ is the required constant. 
\end{proof}
Now that we've shown the requirements (a) and (c) of \cref{prop:convergence_of_contraction} hold, it remains to check (b). By \cref{eq:def_of_contraction}, $\expectation[w_t(s,a)|\gF_t] = 0$. Next, we prove (b)(ii).
\begin{align}
    \expectation[w_t^2(s,a)|\gF_t] &= \alpha^2\expectation[(\tilde{u}(d_t))^2|\gF_t]-\alpha^2(\expectation[\tilde{u}(d_t)|\gF_t])^2\\
    &\leq \alpha^2\expectation[(\tilde{u}(d_t))^2|\gF_t]
\end{align}
By \cref{ass:immediate_reward_bound}, $\expectation |r_t| < (2\sigma)^{\frac{1}{2}}\Gamma(\frac{1}{2})$, where $\Gamma(\cdot)$ is the Gamma function (see \cite{SubGaussian80} for details). We denote the upper bound for $\expectation[|r_t|]$ as $R_1$. Then $\expectation[|d_t|]\leq R_1+2\left\|q_t\right\|_{\infty}$, due to \cref{ass:Utility_Func_Assumption}, it implies that\begin{align}
    \expectation\left[|\tilde{u}(d_t)-\tilde{u}(0)|\right] \leq \expectation\left[L\cdot d_t\right]\leq L(R_1+2\left\|q_t\right\|_{\infty})
\end{align}
Hence by triangle inequality, \begin{align}
    \expectation[|\tilde{u}(d_t)|]\leq \tilde{u}(0)+LR_1+2L\left\|q_t\right\|_{\infty}
\end{align}
And since \begin{align}
    (a+b)^2 \leq 2 a^2 + 2 b^2\qquad \forall a,b\in\sR
\end{align}
, we have\begin{align}
    (|\tilde{u}(0)|+LR_1 + 2L\left\|q_t\right\|_{\infty})^2\leq 2(|\tilde{u}(0)|+LR_1)^2 + 8L^2\left\|q_t\right\|_{\infty}^2
\end{align}
And since \begin{align}
    \expectation\left[\left(\tilde{u}(d_t)-\tilde{u}(0)\right)^2 | \gF_t\right] &\le\expectation\left[L\cdot d_t^2\right]\\
    &= \expectation\left[L\cdot\left(r_t + \gamma\cdot\underset{a}{\max}q_t(s^{\prime},a)-q_t(s,a)\right)^2\right]\\
    &= \expectation\left[L\cdot\left(r_t^2 + 2r_t\cdot(\gamma\cdot\underset{a}{\max}q_t(s^{\prime},a)-q_t(s,a))+(\gamma\cdot\underset{a}{\max}q_t(s^{\prime},a)-q_t(s,a))^2\right)\right]\\
    &= LR_2 + 2LR_1(1-\gamma)\cdot\left\|q_t\right\|_{\infty} + L(1-\gamma)^2\cdot\left\|q_t\right\|_{\infty}^2
\end{align}
where $R_2$ is the upper bound for $\expectation[r_t^2]$ due to \cref{ass:immediate_reward_bound} ($\expectation[r_t^2]\leq 4\sigma^2\cdot \Gamma(1)$ \cite{SubGaussian80}).

Note that here $\tilde{u}(0)=0$, hence we have \begin{align}
    \alpha^2\expectation[(\tilde{u}(d_t) )^2|\gF_t]\leq \alpha^2\cdot\left(LR_2 + 2LR_1(1-\gamma)\cdot\left\|q_t\right\|_{\infty} + L(1-\gamma)^2\cdot\left\|q_t\right\|_{\infty}^2\right)
\end{align}
Hence, \begin{align}
    \expectation[w_t^2(s,a)|\gF_t]\leq 2\alpha^2\cdot\left(LR_2 + 2LR_1(1-\gamma)\cdot\left\|q_t\right\|_{\infty} + L(1-\gamma)^2\cdot\left\|q_t\right\|_{\infty}^2\right)
\end{align}
if $\left\|q_t\right\|_{\infty}\leq 1$, then \begin{align}
    \expectation[w_t^2(s,a)|\gF_t]\leq 2\alpha^2\cdot\left(LR_2 + 2LR_1(1-\gamma) + L(1-\gamma)^2\cdot\left\|q_t\right\|_{\infty}^2\right)
\end{align}
if $\left\|q_t\right\|_{\infty}> 1$, then \begin{align}
    \expectation[w_t^2(s,a)|\gF_t]\leq 2\alpha^2\cdot\left(LR_2 +(2LR_1(1-\gamma)+L(1-\gamma)^2)\cdot\left\|q_t\right\|_{\infty}^2\right)
\end{align}
Then we have shown that $q_t$ satisfy all of the requirements in \cref{prop:convergence_of_contraction}, then $q_t\rightarrow q^*$ with probability 1. 

\subsection{Nash-Q Learning Algorithm}
\label{sec:Multi_Agent_QLearning}
This section describes the Nash-Q Learning Algorithm~\cite{MultiAgentQLearning98} and its convergence guarantees, we restate them here since our \cref{alg:MultiAgent_QLearning_RiskAverse} (\RAAM) is designed based on Nash-Q. Also note that \cref{ass:Bimatrix_Nash_Assumption} will also be used in \RAAM.
 
\begin{algorithm}[ht]
\caption{Nash Q-Learning for Agent $A$~\cite{MultiAgentQLearning98}}
\label{alg:MultiAgent_QLearning}
\scriptsize
\begin{algorithmic}[1]
\STATE For $\forall (s,a_A,a_B)$, initialize $Q^1_A(s,a_A,a_B) = 0$; $Q^2_A(s,a_A,a_B) = 0$; $N_A(s,a_A,a_B) = 0$.
\FOR{$t = 1$ to $T$}
\STATE At state $s_t$, compute $\pi^1_A(s_t)$, which is a mixed strategy Nash equilibrium solution of the bimatrix game $(Q^1_A(s_t), Q^2_A(s_t))$.
\STATE Choose action $a_t^A$ based on $\pi^1_A(s_t)$ according to $\epsilon$-greedy strategy.
\STATE Observe $r_t^A, r_t^B, a_t^B$ and $s_{t+1}$.
\STATE At state $s_{t+1}$, compute $\pi^1_A(s_{t+1})$,$\pi^2_A(s_{t+1})$, which are mixed strategies Nash equilibrium solution of the bimatrix game $(Q^1_A(s_{t+1}), Q^2_A(s_{t+1}))$.
\STATE $N_A(s_t,a^A_t, a^B_t) = N_A(s_t,a^A_t, a^B_t) + 1$
\STATE Set learning rate $\alpha_t^A = \frac{1}{N_A(s_t,a^A_t, a^B_t)}$.
\STATE Update $Q_A^1, Q_A^2$ such that 
\begin{align*}
    Q_A^1(s_t,a^A_t, a^B_t) = (1-\alpha_t^A)\cdot Q_A^1(s_t,a^A_t, a^B_t) + \alpha_t^A\cdot\left[r_t^A + \gamma\cdot\pi^1_A(s_{t+1})Q_A^1(s_{t+1})\pi^2_A(s_{t+1})\right]\\
    Q_A^2(s_t,a^A_t, a^B_t) = (1-\alpha_t^A)\cdot Q_A^2(s_t,a^A_t, a^B_t) + \alpha_t^A\cdot\left[r_t^B + \gamma\cdot\pi^1_A(s_{t+1})Q_A^2(s_{t+1})\pi^2_A(s_{t+1})\right]
\end{align*}
\ENDFOR
\end{algorithmic}
\end{algorithm}
\begin{assumption}\cite{MultiAgentQLearning98}\label{ass:Bimatrix_Nash_Assumption}
A Nash equilibrium $(\pi^1(s),\pi^2(s))$ for any bimatrix game $(Q^1(s),Q^2(s))$ during the training process satisfies one of the following properties :\begin{enumerate}
    \item The Nash equilibrium is global optimal. \begin{align}
        \pi^1(s)Q^k(s)\pi^2(s)\geq \hat{\pi}^1(s)Q^k(s)\hat{\pi}^2(s)\qquad \forall\hat{\pi}^1 (s),\hat{\pi}^2 (s),\;and\;k=1,2
    \end{align}
    \item If the Nash equilibrium is not a global optimal, then an agent receives a higher payoff when the other agent deviates from the Nash equilibrium strategy.\begin{align}
        \pi^1(s)Q^1(s)\pi^2(s)\leq \pi^1(s)Q^1(s)\hat{\pi}^2(s)\qquad \forall \hat{\pi}^2(s)\\
        \pi^1(s)Q^2(s)\pi^2(s)\leq \hat{\pi}^1(s)Q^2(s)\pi^2(s)\qquad \forall \hat{\pi}^1(s)
    \end{align}
\end{enumerate}
\end{assumption}
\begin{theorem} (Theorem 4, \citeauthor{MultiAgentQLearning98}~1998)
Under \cref{ass:Bimatrix_Nash_Assumption}, the coupled sequences $Q_A^1, Q_A^2$ updated by \cref{alg:MultiAgent_QLearning}, converge to the Nash equilibrium Q values $(Q^{1}_{*}, Q^{2}_{*})$, with $Q^{k}_{*}\;(k=1,2)$ defined as 
\begin{align}
    Q^{1}_*(s,a^A,a^B) = r^A(s,a^A,a^B) + \gamma\cdot\expectation_{s^{\prime}\sim\gP(\cdot|s,a^A,a^B)}\left[J^A(s^{\prime},\pi^{A}_{*}, \pi^{B}_{*})\right]\\
    Q^{2}_*(s,a^A,a^B) = r^B(s,a^A,a^B) + \gamma\cdot\expectation_{s^{\prime}\sim\gP(\cdot|s,a^A,a^B)}\left[J^B(s^{\prime},\pi^{A}_{*}, \pi^{B}_{*})\right]
\end{align}
where $(\pi^{A}_{*}, \pi^{B}_{*})$ is a Nash equilibrium solution for this stochastic game $(J^A, J^B)$ and \begin{align}
    J^A(s^{\prime},\pi^{A}_{*}, \pi^{B}_{*}) = \sum_{t=0}^{\infty}\gamma^t\expectation\left[r_t^A|\pi^A_*, \pi^B_*, s_0 = s^{\prime}\right]\\
    J^B(s^{\prime},\pi^{A}_{*}, \pi^{B}_{*}) = \sum_{t=0}^{\infty}\gamma^t\expectation\left[r_t^B|\pi^A_*, \pi^B_*, s_0 = s^{\prime}\right]
\end{align}
\end{theorem}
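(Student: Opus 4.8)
The plan is to recast the coupled updates of \cref{alg:MultiAgent_QLearning} as a single stochastic-approximation iteration and then invoke \cref{prop:convergence_of_contraction}, exactly as was done for the single-agent case in \cref{sec:proof_of_update_operator_converge}. First I would stack the two learned tables into one object $Q=(Q^1,Q^2)$ and equip it with the sup-norm $\|Q\|_\infty=\max_{s,a,b,k}|Q^k(s,a,b)|$. For a fixed state $s$, given the bimatrix game $(Q^1(s),Q^2(s))$ with chosen Nash equilibrium $(\pi^1(s),\pi^2(s))$, write the Nash value $\mathrm{Nash}^k Q(s)=\pi^1(s)Q^k(s)\pi^2(s)$. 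Then both updates collapse into
\begin{align*}
Q^k_{t+1}(s,a,b)=(1-\alpha_t)Q^k_t(s,a,b)+\alpha_t\big[(PQ_t)^k(s,a,b)+w_t^k(s,a,b)\big],
\end{align*}
where $(PQ)^k(s,a,b)=\expectation_{s'\sim\gP(\cdot|s,a,b)}\big[r^k(s,a,b)+\gamma\,\mathrm{Nash}^k Q(s')\big]$ and $w_t^k$ is the sampling noise. The tables $(Q^1_*,Q^2_*)$ of the statement are, by their defining equations, precisely the fixed point $PQ_*=Q_*$, so convergence follows once $P$ is shown to be a sup-norm contraction and the noise conditions hold.

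The crux, and the step I expect to be the main obstacle, is the contraction property of $P$, which reduces to non-expansiveness of the Nash-value map: for any two stacked tables $\hat Q,\tilde Q$ and every state $s$,
\begin{align*}
\big|\mathrm{Nash}^k\hat Q(s)-\mathrm{Nash}^k\tilde Q(s)\big|\le\max_{a,b,j}\big|\hat Q^j(s,a,b)-\tilde Q^j(s,a,b)\big|.
\end{align*}
Granting this, $(P\hat Q)^k-(P\tilde Q)^k=\gamma\,\expectation_{s'}[\mathrm{Nash}^k\hat Q(s')-\mathrm{Nash}^k\tilde Q(s')]$ gives $\|P\hat Q-P\tilde Q\|_\infty\le\gamma\|\hat Q-\tilde Q\|_\infty$, a genuine $\gamma$-contraction with unique fixed point $Q_*$. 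The difficulty is that Nash values of a bimatrix game are not continuous in the payoff entries in general, so non-expansiveness is false without structural hypotheses — this is exactly why \cref{ass:Bimatrix_Nash_Assumption} is imposed, and it also resolves the ambiguity of equilibrium selection in the definition of $P$.

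I would prove non-expansiveness by the case split of \cref{ass:Bimatrix_Nash_Assumption}. When the equilibria $(\hat\pi^1,\hat\pi^2)$ of $\hat Q(s)$ and $(\tilde\pi^1,\tilde\pi^2)$ of $\tilde Q(s)$ are both global optima, global optimality of $\hat\pi$ gives $\hat\pi^1\hat Q^k\hat\pi^2\ge\tilde\pi^1\hat Q^k\tilde\pi^2$, and subtracting $\tilde\pi^1\tilde Q^k\tilde\pi^2$ lower-bounds the difference by $\tilde\pi^1(\hat Q^k-\tilde Q^k)\tilde\pi^2$, whose absolute value is at most $\max_{a,b}|\hat Q^k-\tilde Q^k|$ since $\tilde\pi^1,\tilde\pi^2$ are probability vectors; the symmetric bound from optimality of $\tilde\pi$ closes the two-sided estimate. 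The saddle-point branch is handled analogously using the inequalities stating that an agent's payoff cannot fall when its opponent deviates, and the genuinely delicate part is the mixed case in which one equilibrium is a global optimum and the other a saddle point, where one chains the two families of inequalities through a common comparison strategy to recover the same bound. This is the portion of the Hu--Wellman argument that carries the real weight.

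Finally I would discharge the noise requirements of \cref{prop:convergence_of_contraction}. With $\gF_t$ the history through time $t$, the term $w_t^k=r_t^k+\gamma\,\mathrm{Nash}^k Q_t(s_{t+1})-(PQ_t)^k$ satisfies $\expectation[w_t^k\mid\gF_t]=0$ by construction, and $\expectation[(w_t^k)^2\mid\gF_t]\le A+B\|Q_t\|_\infty^2$ follows from boundedness of rewards (or a sub-Gaussian tail as in \cref{ass:immediate_reward_bound}) together with $|\mathrm{Nash}^k Q_t(s_{t+1})|\le\|Q_t\|_\infty$; the learning-rate conditions hold because $\alpha_t=1/N(s_t,a_t^A,a_t^B)$ and every triple $(s,a^A,a^B)$ is visited infinitely often. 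With hypotheses (a)--(c) of \cref{prop:convergence_of_contraction} verified, the proposition yields $Q_t\to Q_*$ w.p.\ 1; reading off the two components gives $Q_A^1\to Q^1_*$ and $Q_A^2\to Q^2_*$, and since $(\pi^A_*,\pi^B_*)$ is the equilibrium of the limiting bimatrix game, the fixed-point identities defining $Q_*$ show it is a Nash equilibrium of the stochastic game $(J^A,J^B)$ with the stated equilibrium payoffs.
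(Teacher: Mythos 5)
The paper does not actually prove this theorem: it is imported verbatim from Hu \& Wellman (1998), so the benchmark is the original argument, and your outline does track its skeleton faithfully --- recast the coupled updates as a stochastic-approximation iteration, show the Nash operator $P$ is a $\gamma$-contraction in sup-norm via \cref{ass:Bimatrix_Nash_Assumption}, and finish with the same machinery the paper uses elsewhere (\cref{prop:convergence_of_contraction}, or equivalently the conditional averaging lemma, \cref{lem:Conditional_Averate_lemma}). Your computations for the two pure cases are correct: when both equilibria are global optima, the two-sided bound via the comparison profiles $\tilde\pi$ and $\hat\pi$ works exactly as you wrote, and the all-saddle case goes through by pairing each agent's Nash inequality with the opponent-deviation inequality.

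The genuine gap is your mixed case. You assert that when the equilibrium of $\hat Q(s)$ is a global optimum while that of $\tilde Q(s)$ is a saddle point, one can ``chain the two families of inequalities through a common comparison strategy to recover the same bound.'' This is false, and no chaining rescues it: the global-optimum inequalities bound $\mathrm{Nash}^k\hat Q(s)$ from below against arbitrary joint profiles, whereas the saddle inequalities for $\tilde\pi$ only control unilateral deviations, and these move in opposite directions for the two players, so the two families cannot be evaluated at a common profile. Indeed there are explicit counterexamples --- entrywise-close bimatrix games, one whose used equilibrium is a global optimum and one whose used equilibrium is a saddle point, with Nash values differing by far more than the sup distance of the payoffs --- so the non-expansiveness of the Nash-value map, and hence the contraction of $P$, genuinely fails under the per-game disjunction form of \cref{ass:Bimatrix_Nash_Assumption}. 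This is precisely the flaw in the original 1998 proof identified by \citet{bowling2000convergence}, a reference this paper itself invokes when noting that the assumptions ``break easily''; the corrected version (Hu \& Wellman, 2003) strengthens the hypothesis so that the \emph{same} alternative holds uniformly across all stage games encountered during learning: either every $(Q^1_t(s),Q^2_t(s))$ has a global optimum whose value is used, or every one has a saddle point whose value is used, so the mixed case never arises. Under that uniform reading, your two single-case estimates constitute the whole contraction proof and the rest of your argument (mean-zero noise, second-moment bound from bounded rewards, Robbins--Monro rates from $\alpha_t = 1/N$ with infinite visitation) closes correctly; under the disjunctive reading you adopted, the argument collapses at exactly the step you flagged as carrying ``the real weight.''
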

\subsection{Proof of \cref{thm:Convergence_of_RAA}}\label{sec:proof_of_convergence_RAA}
Poisson masks $M\sim Poisson(1)$ provides parallel learning since $Binomial(T, \frac{1}{T})\rightarrow Poisson(1)$ as $T\rightarrow\infty$, so each Q table $Q^i$ is trained in parallel. The proof of convergence of $Q^i$ for all $i\in\{1,..., k\}$ is exactly same as \cref{sec:proof_RARL}. Hence $\frac{1}{k}\sum_{i=1}^{k}Q^i\rightarrow Q^*$ w.p. 1.

\subsection{Proof of convergence of \cref{alg:MultiAgent_QLearning_RiskAverse} (\RAAM)}
In this section, we prove the convergence of \cref{alg:MultiAgent_QLearning_RiskAverse} under \cref{ass:Bimatrix_Nash_Assumption}.

The convergence proof is based on the following lemma\begin{lemma}\label{lem:Conditional_Averate_lemma}[Conditional Averaging Lemma~\cite{ValueBasedRL99}] Assume the learning rate $\alpha_t$ satisfies \cref{prop:convergence_of_contraction}(a). Then, the process 
$Q_{t+1}(i) =(1-\alpha_t(i))Q_t(i)+\alpha_t w_t(i)$ converges to $\expectation[w_t(i)|h_t, \alpha_t]$, where $h_t$ is the history at time $t$.
\end{lemma}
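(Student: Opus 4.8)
The plan is to recognize the averaging recursion as a degenerate special case of the stochastic-approximation result already quoted as \cref{prop:convergence_of_contraction}, in which the underlying operator is the constant map whose value is the conditional mean of $w_t$. Since a constant map is trivially a sup-norm contraction (with modulus $0$), it has a unique fixed point equal to that constant, and \cref{prop:convergence_of_contraction} then delivers convergence of $Q_t(i)$ to it with probability $1$. The only work is to massage the one-term update into the canonical $(1-\alpha_t)Q_t + \alpha_t[(HQ_t) + \text{noise}]$ form and to check the noise hypotheses.

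Concretely, first I would write $\bar w_t(i) := \expectation[w_t(i)\mid h_t]$, with $h_t$ the history of the algorithm, and decompose $w_t(i) = \bar w_t(i) + \eta_t(i)$, where $\eta_t(i) := w_t(i) - \bar w_t(i)$ is a martingale-difference term obeying $\expectation[\eta_t(i)\mid h_t]=0$. The recursion then reads $Q_{t+1}(i) = (1-\alpha_t(i))Q_t(i) + \alpha_t(i)\big[(HQ_t)(i) + \eta_t(i)\big]$ with the constant operator $(HQ)(i) := \bar w(i)$ independent of $Q$. Next I would verify the three hypotheses of \cref{prop:convergence_of_contraction}: condition (a) on the step sizes is assumed in the lemma; condition (b)(i) holds by construction of $\eta_t$; condition (b)(ii) follows from a bound on the conditional second moment of $w_t(i)$, which in the application to \cref{alg:MultiAgent_QLearning_RiskAverse} is obtained exactly as in the RAQL argument of \cref{sec:proof_of_update_operator_converge} using \cref{ass:Utility_Func_Assumption} and the sub-Gaussian reward bound \cref{ass:immediate_reward_bound}; and condition (c) is immediate, since a constant operator satisfies $\|Hq - Hq'\|_\infty = 0 \le \bar\alpha\|q-q'\|_\infty$. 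With the fixed point of $H$ equal to $q^*(i)=\bar w(i)$, the proposition yields $Q_t(i)\to \expectation[w_t(i)\mid h_t]$ w.p. $1$.

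The main obstacle is conceptual rather than computational: the limit $\expectation[w_t(i)\mid h_t,\alpha_t]$ is only well posed if the conditional means $\bar w_t(i)$ stabilize in $t$, since the clean reduction above implicitly treats $\bar w(i)$ as a fixed target. In the Nash-Q setting the relevant $w_t$ is the bootstrapped (utility-transformed) Bellman target, whose conditional mean depends on the still-evolving equilibrium value at $s_{t+1}$; hence I would need to argue that the iterate tracks a convergent sequence of targets rather than a single constant. I expect to handle this through the coupled structure of the outer induction that invokes this lemma, so that the targets themselves converge, or by appealing to the asynchronous stochastic-approximation form of the result that tolerates a slowly varying $H$. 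Verifying the conditional second-moment bound (b)(ii) for the particular $w_t$ produced by the exponential-utility update is the only genuinely technical step, and it reuses the estimates already established for RAQL.
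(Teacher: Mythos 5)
The paper does not actually prove this lemma: it is imported verbatim from \cite{ValueBasedRL99} and used as a black box in the convergence argument for \cref{alg:MultiAgent_QLearning_RiskAverse}, so there is no in-paper proof to compare against. Judged on its own terms, your argument is the standard proof of the result and is correct in structure: decomposing $w_t(i)=\bar w(i)+\eta_t(i)$ and recasting the recursion as the canonical iteration with the constant operator $(Hq)(i)=\bar w(i)$ makes condition (c) of \cref{prop:convergence_of_contraction} trivial (contraction modulus $0$, unique fixed point $\bar w$), condition (b)(i) true by construction of the martingale difference $\eta_t$, and condition (b)(ii) a bounded-conditional-second-moment requirement on $w_t$, after which the proposition yields $Q_t(i)\rightarrow\bar w(i)$ w.p.\ 1.

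The caveat you flag is genuine and is in fact the one place where the paper's restatement is loose: the limit ``$\expectation[w_t(i)\mid h_t,\alpha_t]$'' is only well posed under the hypothesis, present in the original source but dropped here, that this conditional mean equals a fixed value $w(i)$ independent of $t$ and the history (together with uniformly bounded conditional variances). Under that hypothesis your constant-operator reduction closes with no further work, and your worry about tracking a moving target is correctly assigned: it is not part of this lemma but of the outer argument in the paper's RAM-Q proof, where the lemma is combined with the $\left(1-\alpha\epsilon(1-\gamma)\right)$-contraction property of $(H^P,H^A)$ to handle the fact that the conditional mean of the bootstrapped, utility-transformed target depends on the still-evolving $Q_t$. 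Your plan to defer that issue to the coupled outer induction therefore matches how the paper actually deploys the lemma, and your proposal to verify (b)(ii) by reusing the RAQL estimates under \cref{ass:Utility_Func_Assumption} and \cref{ass:immediate_reward_bound} is the right verification for that application.
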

We take the proof of convergence of $Q^P$ as an example, and the proof of convergence of $Q^A$ is exactly the same. And we first reformulate the update rule \cref{eq:UpdateRule_MultiAgent_RiskAverse_1} as :\begin{align}
    &Q^P(s_t,a^P_t, a^A_t) = (1-\frac{\alpha_t}{\alpha})\cdot Q^P(s_t,a^P_t, a^A_t) +\\ &\frac{\alpha_t}{\alpha}\cdot \left[\alpha\cdot u^P\left(r_t^P + \gamma\cdot\pi^P(s_{t+1})Q^P(s_{t+1})\pi^A(s_{t+1})-Q^P(s_t,a^P_t, a^A_t)\right) - \alpha\cdot x_0 + Q^P(s_t,a^P_t, a^A_t)\right]
\end{align}
And we set \begin{align}\label{eq:def_of_wt_2}
    (H^P Q^P)(s_t,a^P_t, a^A_t) &= \alpha\cdot u^P\left(r_t^P + \gamma\cdot\pi^P(s_{t+1})Q^P(s_{t+1})\pi^A(s_{t+1})-Q^P(s_t,a^P_t, a^A_t)\right) - \alpha\cdot x_0 + Q^P(s_t,a^P_t, a^A_t)
\end{align}
And $H^A Q^A$ is defined symmetrically as \begin{align}
    (H^A Q^A)(s_t,a^P_t, a^A_t) &= \alpha\cdot u^A\left(r_t^A + \gamma\cdot\pi^P(s_{t+1})Q^A(s_{t+1})\pi^A(s_{t+1})-Q^A(s_t,a^P_t, a^A_t)\right) - \alpha\cdot x_1 + Q^A(s_t,a^P_t, a^A_t)
\end{align}
It's shown in \cite{MultiAgentQLearning98} that the operator $(M^P_t,M^A_t)$ is a $\gamma$-contraction mapping where $(M^P_t,M^A_t)$ is defined as \begin{align}
    M^P_t Q^P (s) = r^P_t + \gamma\cdot\pi^P(s)Q^P(s)\pi^A(s)\\
    M^A_t Q^A (s) = r^A_t + \gamma\cdot\pi^P(s)Q^A(s)\pi^A(s)
\end{align}

Next, we show that $(H^P, H^A)$ is a contraction under sup-norm (under assumption \cref{ass:Utility_Func_Assumption}).\begin{align}
    H^P Q^P - H^P \hat{Q}^P &=\alpha\cdot\left[\xi^P_{Q^P,\hat{Q}^P}\cdot\left(M^P Q^P - M^P \hat{Q}^P -(Q^P-\hat{Q}^P)\right)\right] + (Q^P-\hat{Q}^P)\\
    &\le \alpha\cdot\left[\xi^P_{Q^P,\hat{Q}^P}\cdot(\gamma-1)\left\|Q^P-\hat{Q}^P\right\|_{\infty}\right] + \left\|Q^P-\hat{Q}^P\right\|_{\infty}\\
    &\leq \left(1-\alpha\epsilon(1-\gamma)\right)\cdot\left\|Q^P-\hat{Q}^P\right\|_{\infty}
\end{align}
Similarly, $ H^A Q^A - H^A \hat{Q}^A\leq \left(1-\alpha\epsilon(1-\gamma)\right)\cdot\left\|Q^A-\hat{Q}^A\right\|_{\infty}$.

Hence $(H^P, H^A)$ is a $\left(1-\alpha\epsilon(1-\gamma)\right)$-contraction under sup-norm. Hence by \cref{lem:Conditional_Averate_lemma} the update rule \cref{eq:UpdateRule_MultiAgent_RiskAverse_1,eq:UpdateRule_MultiAgent_RiskAverse_2} respectively converges to \begin{align}
    Q^P(s_t,a^P_t, a^A_t)\rightarrow\expectation\left[\alpha\cdot u^P\left(r_t^P + \gamma\cdot\pi^P(s_{t+1})Q^P(s_{t+1})\pi^A(s_{t+1})-Q^P(s_t,a^P_t, a^A_t)\right) - \alpha\cdot x_0 + Q^P(s_t,a^P_t, a^A_t)\right]\\
    Q^A(s_t,a^P_t, a^A_t)\rightarrow\expectation\left[\alpha\cdot u^A\left(r_t^A + \gamma\cdot\pi^P(s_{t+1})Q^A(s_{t+1})\pi^A(s_{t+1})-Q^A(s_t,a^P_t, a^A_t)\right) - \alpha\cdot x_1 + Q^A(s_t,a^P_t, a^A_t)\right]
\end{align}

i.e., \cref{eq:UpdateRule_MultiAgent_RiskAverse_1,eq:UpdateRule_MultiAgent_RiskAverse_2} respectively converges to $Q^*_P,Q^*_A$ where $Q^*_P, Q^*_A$ are the solution to the Bellman equations \begin{align}\label{eq:Bellman_Q_MultiAgent_1}
    \expectation_{s,a^P,a^A}\left[u^P\left(r^P(s,a^P,a^A) + \gamma\cdot\pi^{P*}(s^{\prime})Q_P^*(s^{\prime})\pi^{A*}(s^{\prime}) - Q_P^*(s,a^P,a^A)\right)\right] = x_0\\
    \expectation_{s,a^P,a^A}\left[u^A\left(r^A(s,a^P,a^A) + \gamma\cdot\pi^{P*}(s^{\prime})Q_A^*(s^{\prime})\pi^{A*}(s^{\prime}) - Q_A^*(s,a^P,a^A)\right)\right] = x_1
\end{align}

where $(\pi^{P*},\pi^{A*})$ is the Nash equilibrium solution to the bimatrix game $(Q_P^*, Q_A^*)$. Next we show that $(\pi^{P*},\pi^{A*})$ is a Nash equilibrium solution for the game with equilibrium payoffs $\left(\tilde{J}^P(s,\pi^{P*},\pi^{A*}), \tilde{J}^A(s,\pi^{P*},\pi^{A*})\right)$. 

As in \cite{shen2014risk}, for any $X\in\sR$, define $\gU^P(X|s,a^P,a^A) :\sR \times \gS\times\gA\times\gA\rightarrow\sR$ be a mapping (for brevity, could be written as $\gU^P_{s,a^P,a^A}(X)$ ) defined by \begin{align}
    \gU^P_{s,a^P,a^A}(X) = sup \Big\{m\in\sR|\expectation_{s,a^P,a^A}\left[u^P(X-m)\right]\geq x_0\Big\}
\end{align}
Similar to \cite{shen2014risk,RiskSensitiveshen13}, suppose $(\pi^P,\pi^A)$ is a Nash equilibrium solution to the game $\left(\tilde{J}^P(s,\pi^P,\pi^A), \tilde{J}^A(s,\pi^P,\pi^A)\right)$, then the payoffs $\tilde{J}^P(s,\pi^P,\pi^A),\; \tilde{J}^A(s,\pi^P,\pi^A)$ are the solution to the risk-sensitive Bellman equations\begin{align}\label{eq:State_Value_Optimize}
    \tilde{J}^P(s,\pi^P,\pi^A) = \pi^P(s)\gU^P_{s,a^P,a^A}\left(r^P(s,:,:) + \gamma\cdot \tilde{J}^P(s^{\prime},\pi^P,\pi^A)\right)\pi^A(s)\qquad\forall s\in\gS\\
    \tilde{J}^A(s,\pi^P,\pi^A) = \pi^P(s)\gU^P_{s,a^P,a^A}\left(r^A(s,:,:) + \gamma\cdot \tilde{J}^A(s^{\prime},\pi^P,\pi^A)\right)\pi^A(s)\qquad\forall s\in\gS
\end{align}
And the corresponding Q tables satisfies\begin{align}\label{eq:Q_Value_Optimize_Equilibrium}
    Q_P (s,a^P,a^A) = \gU^P_{s,a^P,a^A}\left(r^P(s,a^P,a^A) + \gamma \tilde{J}^P(s^{\prime},\pi^P,\pi^A)\right)\\
    Q_A (s,a^P,a^A) = \gU^P_{s,a^P,a^A}\left(r^A(s,a^P,a^A) + \gamma \tilde{J}^A(s^{\prime},\pi^P,\pi^A)\right)
\end{align} 
Note that $\gU^P_{s,a^P,a^A}$ is monotonic one-to-one mapping, so as shown in [\textbf{Theorem 4.6.5}~\cite{filar-competitiveMDP}], $(\pi^P, \pi^A)$ are the Nash equilibrium solution to the bimatrix game $(Q_P, Q_A)$.
Then if we can show that $Q_P = Q_P^*$ and $Q_A = Q_A^*$ (i.e., $Q_P$ and $Q_A$ are the solution to \cref{eq:Bellman_Q_MultiAgent_1} ), then the Nash solution of the bimatrix game $(Q_P^*,Q_A^*)$ returned by \cref{alg:MultiAgent_QLearning_RiskAverse} will be the Nash solution for the game $(\tilde{J}^P, \tilde{J}^A)$. 

\cite{shen2014risk} showed that \cref{eq:Q_Value_Optimize_Equilibrium} is equivalent to \begin{align}
    \expectation_{s,a^P,a^A}\left[u^P\left(r^P(s,a^P,a^A) + \gamma \tilde{J}^P(s^{\prime},\pi^P,\pi^A) - Q_P (s,a^P,a^A)\right)\right] = x_0 \\
    \expectation_{s,a^P,a^A}\left[u^A\left(r^A(s,a^P,a^A) + \gamma \tilde{J}^A(s^{\prime},\pi^P,\pi^A) - Q_A (s,a^P,a^A)\right)\right] = x_1
\end{align}
Plugging \cref{eq:State_Value_Optimize} in, we get \begin{align}
    \expectation_{s,a^P,a^A}\left[u^P\left(r^P(s,a^P,a^A) + \gamma\cdot \pi^P Q_P(s^{\prime})\pi^A - Q_P (s,a^P,a^A)\right)\right] = x_0 \\
    \expectation_{s,a^P,a^A}\left[u^A\left(r^A(s,a^P,a^A) + \gamma\cdot \pi^P Q_A(s^{\prime})\pi^A - Q_A (s,a^P,a^A)\right)\right] = x_1
\end{align}
which is exactly \cref{eq:Bellman_Q_MultiAgent_1}.

Hence we have shown that under \cref{ass:Bimatrix_Nash_Assumption}, \cref{eq:State_Value_Optimize} and \cref{eq:Bellman_Q_MultiAgent_1} are equivalent. Hence \cref{alg:MultiAgent_QLearning_RiskAverse} converges to $(Q_P^*, Q_A^*)$ s.t. the Nash equilibrium solution $(\pi^{P*}, \pi^{A*})$ for the bimatrix game $(Q_P^*, Q_A^*)$ is the Nash equilibrium solution to the game and the equilibrium payoffs are $\tilde{J}^P(s,\pi^{P*}, \pi^{A*})$; $\tilde{J}^A(s,\pi^{P*}, \pi^{A*})$.

\subsection{Discussion of RA3-Q}
\label{sec:Discussion_of_RAAA}

\begin{algorithm*}[t!]
\footnotesize
\caption{Risk-Averse Adversarial Averaged Q-Learning (\RAAA)}
\label{alg:Risk_Averse_Adversarial_Averaged_QLearning_fullversion}
\textbf{Input :} Training steps $T$; Exploration rate $\epsilon$; Number of models $k$; Risk control parameters $\lambda_P, \lambda_A$; Utility function parameters $\beta^P < 0; \beta^A > 0$. 
\begin{spacing}{0.8}
\begin{algorithmic}[1]
\STATE Initialize $Q_P^i(s,a_P,a_A)= 0$; $Q_A^i(s,a_P,a_A)= 0$ for $\forall i = 1,..., k \;$and$\;(s,a_A, a_P)$; $N= \mathbf{0}\in\sR^{|\gS|\times|\gA|\times|\gA|}$;
\STATE Randomly sample action choosing head integers $H_P, H_A\in\{1,...,k\}$.
\FOR{$t = 1$ to $T$}
\STATE $Q_{P} = Q_{P}^{H_P}$
\STATE Compute $\hat{Q}_{P}$ by
\begin{align}
    \hat{Q}_{P}(s,a_P,a_A) = Q_{P}(s,a_P,a_A) - \lambda_P\cdot \frac{\sum_{i=1}^{k}(Q_P^i(s,a_P,a_A) - \bar{Q}_P(s,a_P,a_A))^2}{k-1} \qquad \lambda_P>0
\end{align}
where $\bar{Q}_P(s,a_P,a_A) = \frac{1}{k}\sum_{i=1}^{k}Q_P^i(s,a_P,a_A)$

\STATE $Q_{A} = Q_{A}^{H_A}$
\STATE Compute $\hat{Q}_{A}$ by \begin{align}
    \hat{Q}_{A}(s,a_P,a_A) = Q_{A}(s,a_P,a_A) + \lambda_A\cdot \frac{\sum_{i=1}^{k}(Q_A^i(s,a_P,a_A) - \bar{Q}_A(s,a_P,a_A))^2}{k-1}\qquad \lambda_A>0
\end{align}
where $\bar{Q}_A(s,a_P,a_A) = \frac{1}{k}\sum_{i=1}^{k}Q_A^i(s,a_P,a_A)$
\STATE The optimal actions $(a_P^{\prime}, a_A^{\prime})$ are defined as \begin{align}
    \hat{Q}_P(s_t, a_P^{\prime}, a_A^0) = \underset{a_P,a_A}{\max}\hat{Q}_P(s_t,a_P,a_A)\qquad\text{for some $a_A^0$}\\
    \hat{Q}_A(s_t, a_P^0,  a_A^{\prime}) = \underset{a_P,a_A}{\max}\hat{Q}_A(s_t,a_P,a_A)\qquad\text{for some $a_P^0$}
\end{align}
\STATE Select actions $a_P, a_A$ according to $\hat{Q}_{P},\hat{Q}_{A}$ by applying $\epsilon$-greedy strategy.
\STATE Two agents respectively execute actions $a_P,a_A$ and observe $(s_t, a_P,a_A, r^A_t, r^P_t, s_{t+1})$ 
\STATE Generate mask $M\in \sR^k \sim Poisson(1)$
\STATE $N(s_t,a_P,a_A)= N(s_t,a_P,a_A) + 1$
\STATE $\alpha(s_t,a_P,a_A) =\frac{1}{N(s_t,a_P,a_A)}$
\FOR{$i=1,...,k$}
\IF{$M_i = 1$}
\STATE Update $Q_P^i$ by \begin{align}\label{eq:RA3QProtagonist_UpdateRule}
    Q_P^i(s_t,a_P, a_A) = Q_P^i(s_t,a_P,a_A) + \alpha(s_t,a_P,a_A)\cdot\left[u^P\left(r^P_t +  \gamma\cdot\underset{a_P,a_A}{\max} Q_P^i(s_{t+1}, a_P,a_A) - Q_P^i(s_t, a_P,a_A)\right)-x_0\right]
\end{align}
where $u^P$ is a utility function, here we use $u^P(x) = -e^{\beta^P x}$ where $\beta^P<0$; $x_0 = -1$
\ENDIF
\ENDFOR
\FOR{$i=1,...,k$}
\IF{$M_i = 1$}
\STATE Update $Q_A^i$ by \begin{align}\label{eq:RA3QAdversary_UpdateRule}
    Q_A^i(s_t,a_P,a_A) = Q_A^i(s_t,a_P,a_A) + \alpha(s_t,a_P,a_A)\cdot\left[u^{A}\left(r^A_t + \gamma\cdot\underset{a_P,a_A}{\max} Q_A^i(s_{t+1}, a_P,a_A) - Q_A^i(s_t,a_P, a_A)\right)-x_1\right]
\end{align}
where $u^A$ is a utility function, here we use $u(x) = e^{\beta^A\cdot x}$ where $\beta^A>0$; $x_1 = 1$
\ENDIF
\ENDFOR
\STATE Update $H_P$ and $H_A$ by randomly sampling integers from $1$ to $k$
\ENDFOR
\STATE \textbf{Return} $\frac{1}{k}\sum_{i=1}^{k}Q_P^i$; $\frac{1}{k}\sum_{i=1}^{k}Q_A^i$
\end{algorithmic}
\end{spacing}
\end{algorithm*}

We have presented a short version of \RAAA in \cref{alg:Risk_Averse_Adversarial_Averaged_QLearning}, a detailed version is presented in
\cref{alg:Risk_Averse_Adversarial_Averaged_QLearning_fullversion}.

In this section, we discuss convergence issues on \RAAA. First we discuss a simplified setting where we show that if the adversary's policy is a \emph{fixed} policy $\pi^A_0$, the update rule for protagonist \cref{eq:RA3QProtagonist_UpdateRule} converges to the optimal of $J^P(s,:,\pi^A_0)$. Similarly, if the protagonist's policy is a \emph{fixed} policy $\pi^P_0$, the update rule for adversary \cref{eq:RA3QAdversary_UpdateRule} converges to the optimal of $J^A(s,\pi^P_0,:)$.

Poisson masks $M\sim Poisson(1)$ provides parallel learning since $Binomial(T, \frac{1}{T})\rightarrow Poisson(1)$ as $T\rightarrow\infty$, so each Q table of protagonist/adversary, $Q^i_P$, $Q^i_A$, are trained in parallel respectively.

Similar to \cref{sec:proof_RARL}, we need to prove the convergence of the iterative procedure. We take agent protagonist as an example, and the proof for adversary is similar. 

Fix the policy for adversary, then according to [\cite{shen2014risk} \textbf{Proposition 3.1}], for any random variable $X$, the following statements are equivalent 
$$\text{(i) } \frac{1}{\beta^P}\log \expectation_{\mu}\left[exp\left(\beta^P\cdot X\right)\right] = m^* $$

$$\text{(ii) } \expectation_{\mu}\left[u^P(X-m^*)\right] = x_0$$

We'll use this proposition in the following context to show that our convergent point is the optimal of the objective function $\tilde{J}^P(s,:,\pi^A_0)$.

Compared to \cref{alg:Risk_Averse_QLearning} (RAQL),  \RAAA uses multi-agent extension of MDP (where the transition function is $\gP:\gS\times\gA\times\gA\rightarrow\sR^{\gS}$. We reformulate the update rule \cref{eq:RA3QProtagonist_UpdateRule} by letting \begin{align}
    &q_{t+1}^{P}(s,a_P,a_A) = \left(1- \frac{\alpha_t(s,a_P,a_A)}{\alpha}\right)q^P_t(s,a_P,a_A) + \frac{\alpha_t(s,a_P,a_A)}{\alpha}\cdot\left[\alpha\cdot u(d_t) - x_0 + q^P_t(s,a_P,a_A) \right]\\
    &\text{where } d_t := r_t^P + \gamma\cdot\underset{a_P,a_A}{\max}\;q^P_t(s^{\prime},a_P,a_A) - q^P_t(s,a_P,a_A)\qquad x_0 = -1\qquad \alpha\in(0,\min(L^{-1},1)]
\end{align}
And we set \begin{align}
    (H^P q_t^P)(s,a_P,a_A) &= \alpha\cdot\expectation_{s,a_P,a_A}\left[\tilde{u}\left(r_t^P  + \gamma\cdot\underset{a_P,a_A}{\max}\;q^P_t(s^{\prime},a_P,a_A) - q^P_t(s,a_P,a_A)\right)\right] + q_t^P(s,a_P,a_A)\\
    w_t(s,a_P,a_A) &=\alpha\cdot\tilde{u}(d_t) - \alpha\cdot\expectation_{s,a_P,a_A}\left[\tilde{u}\left(r_t^P  + \gamma\cdot\underset{a_P,a_A}{\max}\;q^P_t(s^{\prime},a_P,a_A) - q^P_t(s,a_P,a_A)\right)\right]\label{eq:def_of_wt}\\
    \tilde{u}(x) &= u(x) - x_0
\end{align}
Next we show that $H^P$ is a $(1-\alpha(1-\gamma)\epsilon)$-contractor under \cref{ass:Utility_Func_Assumption}:

For any two q tables $q,q^{\prime}$, define $v^P(s):= \underset{a_P,a_A}{\max}\;q(s,a_P,a_A)$ and $v^{P^\prime}(s):= \underset{a_P,a_A}{\max}\;q^{\prime}(s,a_P,a_A)$. Thus,\begin{align}
    |v^{P}(s)-v^{P^\prime}(s)|\leq \underset{s,a_P,a_A}{\max}|q(s,a_P,a_A) - q^{\prime}(s,a_P,a_A)| = \left\|q-q^{\prime}\right\|_{\infty}\;
\end{align}
By \cref{ass:Utility_Func_Assumption} and monotonicity of $\tilde{u}$, for given $x,y\in\sR$, there exists $\xi_{(x,y)}\in[\epsilon,L]$ such that $$\tilde{u}(x) - \tilde{u}(y) = \xi_{(x,y)}\cdot (x-y).$$ 
Then we can obtain
\begin{align}
    &(H^P q)(s,a_P,a_A) - (H^P q^{\prime})(s,a_P,a_A)\\
    &= \sum_{s^{\prime}}\gP[s^{\prime}|s,a_P,a_A]\cdot\Big\{\alpha\xi_{(s,a_P,a_A,s^{\prime},q,q^{\prime})}\cdot[\gamma\cdot v^{P}(s^{\prime}) - \gamma\cdot v^{P^{\prime}}(s^{\prime}) - q(s,a_P,a_A) + q^{\prime}(s,a_P,a_A)] + (q(s,a_P,a_A) - q^{\prime}(s,a_P,a_A))\Big\}\\
    &\leq \left(1-\alpha(1-\gamma)\sum_{s^{\prime}}\gP[s^{\prime}|s,a_P,a_A]\cdot\xi_{(s,a_P,a_A,s^{\prime},q,q^{\prime})}\right)\left\|q-q^{\prime}\right\|_{\infty}\\
    &\leq (1-\alpha(1-\gamma)\epsilon)\left\|q-q^{\prime}\right\|_{\infty}
\end{align}
Hence $H^P$ is a contractor.

By \cref{eq:def_of_wt}, $\expectation\left[w_t(s,a_P,a_A)|\gF_t\right] = 0$. Hence it remains to prove b(ii) in \cref{prop:convergence_of_contraction}. \begin{align}
    \expectation\left[w_t^2(s,a_P,a_A)|\gF_t\right] = \alpha^2\cdot\expectation\left[(\tilde{u}(d_t))^2|\gF_t\right] - \alpha^2(\expectation\left[\tilde{u}(d_t)|\gF_t\right])^2\leq \alpha^2\cdot\expectation\left[(\tilde{u}(d_t))^2|\gF_t\right]
\end{align}
Following from the same procedures as \cref{sec:proof_RARL}, condition b(ii) of \cref{prop:convergence_of_contraction} also holds in this case. And recall that the learning rate satisfies condition a, hence by \cref{prop:convergence_of_contraction}, $q\rightarrow q^*$, where $q^*$ is the solution to the Bellman equation \begin{align}
    \expectation_{s,a_P,a_A}\left[u^P\left(r_t^P + \gamma\cdot\underset{a_P,a_A}{\max}\;q(s^{\prime},a_P,a_A) - q(s,a_P,a_A)\right)\right] = x_0\qquad \pi^A_0\text{ is fixed}
\end{align}
for $\forall (s,a_P,a_A)$. Where $s^{\prime}$ is sampled from $\gP[\cdot|s,a_P,a_A]$. Similarly, we can show that for a fixed policy for protagonist, the update rule \cref{eq:RA3QAdversary_UpdateRule} will guarantee that $q_A\rightarrow q_A^*$, where $q_A^*$ is the solution to the Bellman equation \begin{align}
    \expectation_{s,a_P,a_A}\left[u^A\left(r^A_t + \gamma\cdot\underset{a_P,a_A}{\max}\;q(s^{\prime},a_P,a_A) - q(s,a_P,a_A)\right)\right] = x_1 \qquad \pi^P_0\text{ is fixed}
\end{align}
for $\forall (s,a_P,a_A)$. Where $s^{\prime}$ is sampled from $\gP[\cdot|s,a_P,a_A]$. 

Note that this does not imply a convergence guarantee of \RAAA because of the \emph{protagonist/adversary's policy is fixed} assumption. Only if one of the agents (say protagonist) stops learning (and its policy becomes fixed) at some point, then the other agent (adversary) will also converge. Note that in the general multi-agent learning case this is always a challenge and it is often hard to a balance between theoretical algorithms (with convergence guarantees) and practical algorithms (loosing guarantees but with good empirical results), see our experimental results in \cref{sec:risk_and_robustness_evaluation} and related literature~\cite{bowling2002multiagent,weinberg2004best,littman2001value}.

\subsection{Meta-game payoff examples and EGT plots}
\label{sec:meta_game_examples}
\begin{table}[h!]
\scriptsize
    \caption{Payoff Table of Rock-Paper-Scissors}
    \label{table:Payoff_RPS}
    \centering
    \begin{tabular}{c c c|c c c}
        \toprule
       $N_{Rock}$  & $N_{Paper}$ & $N_{Scissors}$ & $R_{Rock}$ & $R_{Paper}$ & $R_{Scissors}$\\ \hline
       2 & 0 & 0 & 0 & 0 & 0\\
       1  &  1 & 0 & -1 & 1 & 0\\
       0 & 2 & 0 & 0 & 0 & 0 \\
       1 & 0 & 1 & 1 & 0 & -1\\
       0 & 0 & 2 & 0 & 0 & 0\\
       0 & 1 & 1 & 0 & -1 & 1\\
       \bottomrule
    \end{tabular}
\end{table}

\begin{figure}[!h]
    \centering
    \includegraphics[width=5cm]{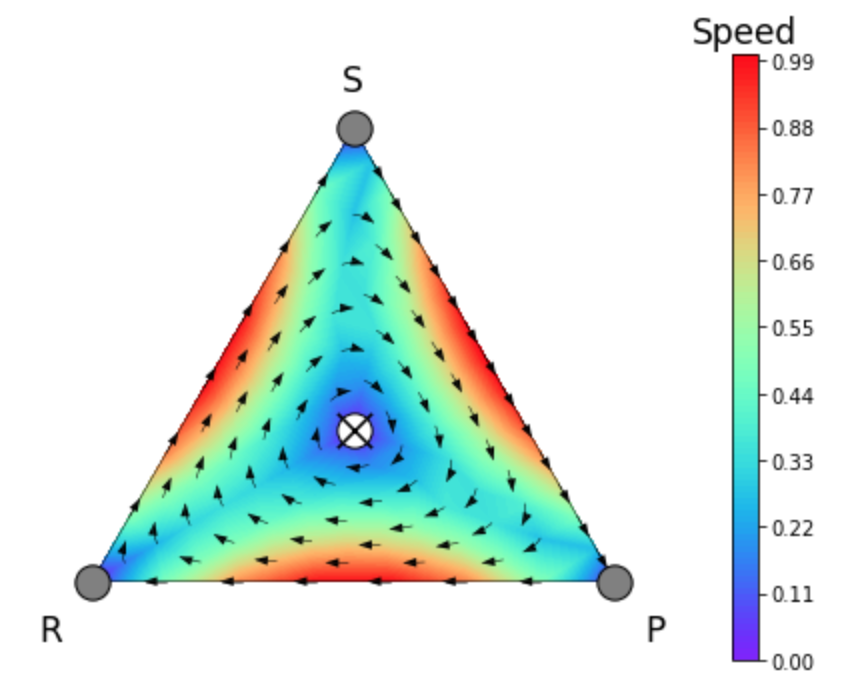}
    \caption{Directional Field of Rock-Paper-Scissors}
    \label{fig:directional_example_rps}
\end{figure}
\begin{figure}[!h]
    \centering
    \includegraphics[width=5cm]{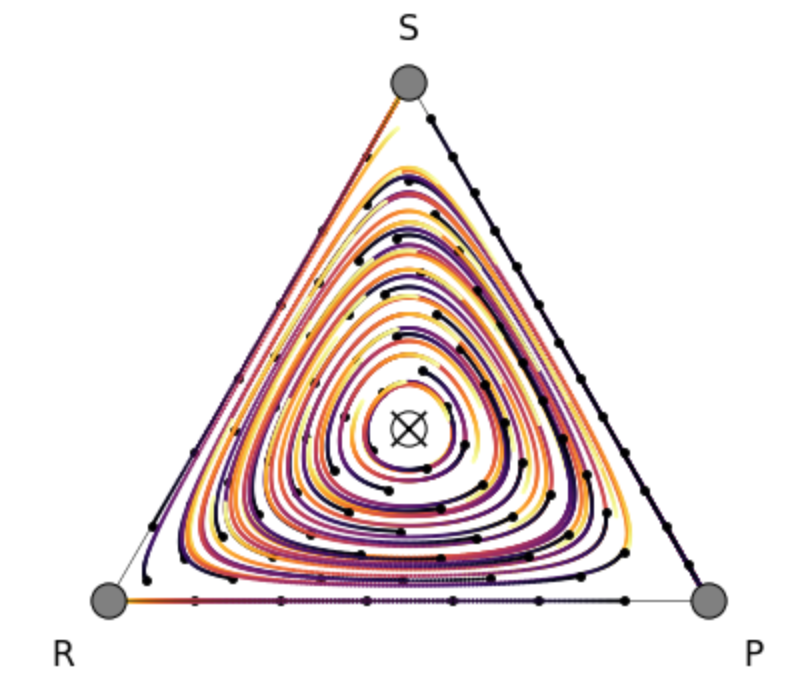}
    \caption{Trajectory Plot of Rock-Paper-Scissors}
    \label{fig:trajectory_example_rps}
\end{figure}
The payoff table of a well-known game rock-scissors-papers is as shown in \cref{table:Payoff_RPS}, its corresponding directional field is as shown in \cref{fig:directional_example_rps}, and its trajectory plot is as shown in \cref{fig:trajectory_example_rps}. It can be observed from \cref{fig:directional_example_rps,fig:trajectory_example_rps} that the equilibrium of Rock-Paper-Scissors is the centroid of the strategies simplex.
\begin{table}[h!]
\scriptsize
    \caption{An example of a meta game payoff table of 2 players, 3 strategies.}
    \label{table:MetaPayoff}
    \centering
    \begin{tabular}{c c c|c c c}
        \toprule
       $N_{i1}$  & $N_{i2}$ & $N_{i3}$ & $R_{i1}$ & $R_{i2}$ & $R_{i3}$\\ \hline
       2 & 0 & 0 & 0.5 & 0 & 0\\
       1  &  1 & 0 & 0.3 & 0.7 & 0\\
       0 & 2 & 0 & 0 & 0.9 & 0 \\
       1 & 0 & 1 & 0.35 & 0 & 0.45\\
       0 & 0 & 2 & 0 & 0 & 0.6\\
       0 & 1 & 1 & 0 & 0.66 & 0.38\\
       \bottomrule
    \end{tabular}
\end{table}

\begin{figure}[!h]
    \centering
    \includegraphics[width=5cm]{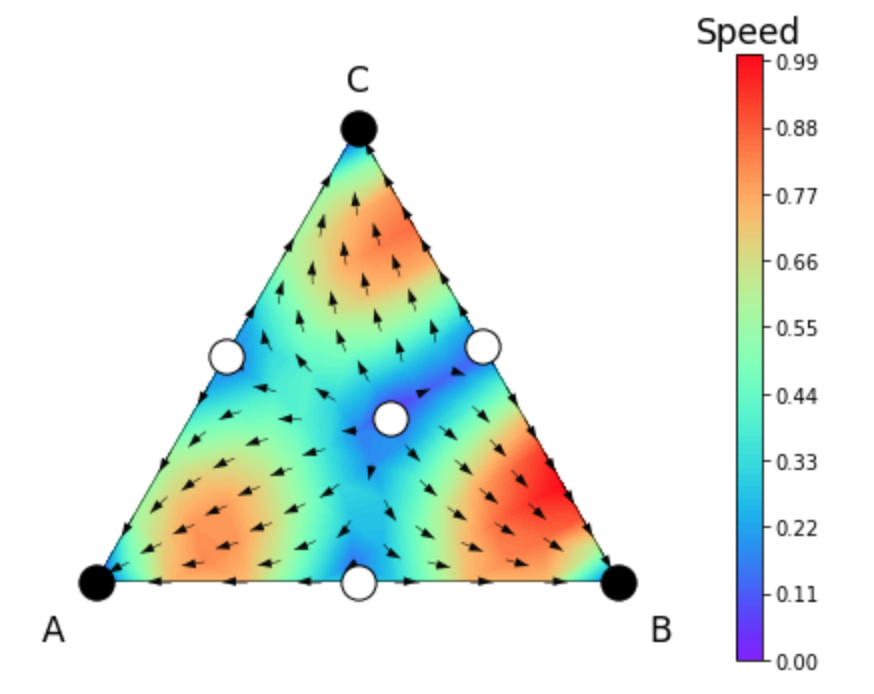}
    \caption{Directional Field of \cref{table:MetaPayoff}}
    \label{fig:directional_example}
\end{figure}

\begin{figure}[!h]
    \centering
    \includegraphics[width=5cm]{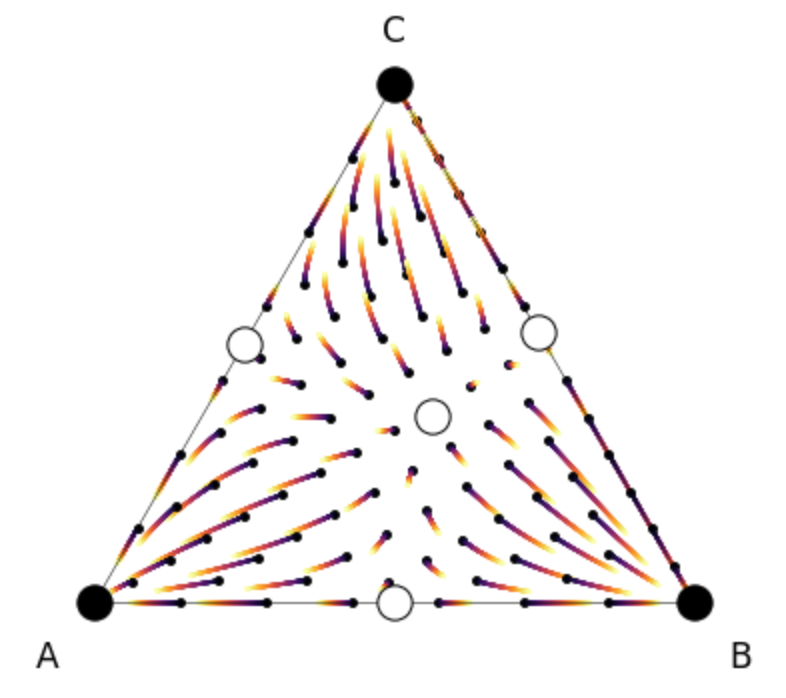}
    \caption{Trajectory Plot of \cref{table:MetaPayoff}}
    \label{fig:trajectory_example}
\end{figure}

Another example of a 2-player meta-game payoff table of 3 strategies is in \cref{table:MetaPayoff} with its corresponding directional field as shown in \cref{fig:directional_example} and its trajectory plot in \cref{fig:trajectory_example}, where the white circles denote unstable equilibria (saddle points) and black solid circles denote globally stable equilibria.

\subsection{Proof of \cref{thm:approx_equili_riskaversegame}}\label{sec:approx_equili_proof}

\begin{theorem}
For a Normal Form Game with $p$ players, and each player $i$ chooses a strategy $\pi^i$ from a set of strategies $S^i = \{\pi^i_1, ..., \pi^i_k\}$ and receives a risk averse payoff $h^i(\pi^1, ..., \pi^p):S^1\times...\times S^p\rightarrow\sR$ satisfying \cref{ass:stochastic_reward_bounded}. If $\mathbf{x}$ is a Nash Equilibrium for the game $\hat{h}^i (\pi^1, ..., \pi^p)$, then it is a $2\epsilon$-Nash equilibrium for the game $h^i (\pi^1, ..., \pi^p)$ with probability $1-\delta$ if we play the game for  $n$ times, where 
\begin{align}
     n  \ge \max
     \left\{ -\frac{8R^2}{\epsilon^2}\log\left[\frac{1}{4}\left(1-(1-\delta)^{\frac{1}{|S^1|\times ...\times |S^p|\times p}}\right)\right], \right.
     \left. \frac{64\beta^2\omega^2\cdot\Gamma(2)}{\epsilon^2\left[1-(1-\delta)^{\frac{1}{|S^1|\times...\times |S^p|\times p}}\right]}\right\}
\end{align}
\end{theorem}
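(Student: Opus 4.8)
The plan is to reduce the claim to a uniform concentration statement and then invoke \cref{lem:approx_equili_normalgame}. That lemma says that if $\mathbf{x}$ is a Nash equilibrium of the empirical game $\hat h$, then it is a $2\epsilon$-Nash equilibrium of the true game $h$ whenever $\sup_{\pi,i}|\hat h^i(\pi) - h^i(\pi)| \le \epsilon$. So it suffices to prove that, after $n$ plays, the event $\mathcal{E} := \{\sup_{\pi,i}|\hat h^i(\pi) - h^i(\pi)| \le \epsilon\}$ holds with probability at least $1-\delta$ for the stated $n$; the $2\epsilon$-Nash conclusion then follows verbatim from the lemma.

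First I would fix a single pair $(\pi,i)$ (a pure strategy profile $\pi$ and a player $i$) and decompose the per-entry error. Writing $\mu := \mathbb{E}[R^i(\pi)]$, $\sigma^2 := \mathbb{V}ar[R^i(\pi)]$, and letting $\hat\sigma^2$ denote the sample variance appearing in \cref{eq:Risk_Averse_Payoff}, the definitions of $h^i$ and $\hat h^i$ give
\begin{align*}
\hat h^i(\pi) - h^i(\pi) = \left(\bar{R^i} - \mu\right) - \beta\left(\hat\sigma^2 - \sigma^2\right),
\end{align*}
so by the triangle inequality it is enough to force $|\bar{R^i} - \mu| \le \epsilon/2$ and $\beta\,|\hat\sigma^2 - \sigma^2| \le \epsilon/2$. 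I would control these two terms by different tools, which is exactly what produces the two branches of the $\max$ in the sample-size bound.

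For the mean term I would apply Hoeffding's inequality: under \cref{ass:stochastic_reward_bounded} the payoffs are bounded (by $R$), giving $\mathbb{P}(|\bar{R^i} - \mu| > \epsilon/2) \le 2\exp(-n\epsilon^2/(8R^2))$, the exponential tail responsible for the first ($\log$) branch. For the variance term I would instead use Chebyshev's inequality, exploiting that the sample variance is unbiased, $\mathbb{E}[\hat\sigma^2] = \sigma^2$, so there is no bias to handle: $\mathbb{P}(|\hat\sigma^2 - \sigma^2| > \epsilon/(2\beta)) \le 4\beta^2\,\mathbb{V}ar[\hat\sigma^2]/\epsilon^2$. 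The remaining ingredient is a bound $\mathbb{V}ar[\hat\sigma^2] \le C/n$, where $C$ is controlled by the fourth central moment of the payoff; the sub-Gaussian part of \cref{ass:stochastic_reward_bounded} supplies a fourth-moment bound of the form $\propto \omega^2\,\Gamma(2)$, which accounts for the $\omega^2\Gamma(2)$ appearing in the second branch. Because Chebyshev gives only a polynomial tail rather than an exponential one, this branch scales polynomially in $1/\delta$ rather than logarithmically.

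Finally I would combine the per-entry bounds by a union bound over all $|S^1|\times\cdots\times|S^p|\times p$ payoff entries. Allowing each entry a failure budget $\zeta := 1-(1-\delta)^{1/(|S^1|\times\cdots\times|S^p|\times p)}$, split equally between its mean event and its variance event, the total failure probability is at most $\delta$ (using $\zeta \le \delta/(|S^1|\times\cdots\times|S^p|\times p)$ by Bernoulli's inequality), so $\mathcal{E}$ holds with probability at least $1-\delta$. Substituting the budget $\zeta/2$ into the Hoeffding and Chebyshev bounds and solving each for $n$ yields the two expressions inside the $\max$; taking $n$ at least as large as both guarantees $\mathcal{E}$, and hence the $2\epsilon$-Nash property. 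The main obstacle is the variance term: unlike the sample mean it is a quadratic, non-sum-of-bounded statistic, so Hoeffding does not apply directly and one must instead bound $\mathbb{V}ar[\hat\sigma^2]$ through the fourth moment and settle for the weaker Chebyshev tail — this is precisely why the two branches have qualitatively different dependence on $\delta$.
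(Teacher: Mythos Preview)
Your proposal is correct and follows essentially the same route as the paper: decompose the per-entry error into a mean part handled by Hoeffding and a variance part handled by Chebyshev together with a sub-Gaussian fourth-moment bound on $\mathbb{V}ar[\hat\sigma^2]$, then aggregate over the $|S^1|\times\cdots\times|S^p|\times p$ payoff entries. The only cosmetic differences are that the paper re-derives \cref{lem:approx_equili_normalgame} inline rather than citing it, and writes the aggregation as a product bound $(1-f(n,\epsilon))^{|S^1|\times\cdots\times|S^p|\times p}\ge 1-\delta$ rather than phrasing it as a union bound; both lead to the same per-entry failure budget $\zeta = 1-(1-\delta)^{1/(|S^1|\times\cdots\times|S^p|\times p)}$ and hence to the two branches of the $\max$.
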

\begin{assumption}\label{ass:stochastic_reward_bounded}
The stochastic return $h$ 
(for each player and each strategy) for each simulation has a sub-Gaussian tail. i,e, there exists $\omega > 0$ s.t. \begin{align}
    \expectation\left[exp\left(c\cdot(h-\expectation[h])\right)\right]&\leq exp\left(\frac{\omega^2 c^2}{2}\right) \qquad \forall c\in \sR
\end{align}
And we also select $R>0$ s.t. $h\in[-R, R]$ almost surely.
\end{assumption}
\begin{proof}
Note that we have the following relation: \begin{align}\label{eq:Equilibrium_Approximation_1}
\mathbb{E}_{\pi\sim \mathbf{x}}\left[h^i (\pi)\right] = \mathbb{E}_{\pi\sim \mathbf{x}}\left[\hat{h}^i (\pi)\right] + \mathbb{E}_{\pi\sim \mathbf{x}}\left[h^i (\pi) - \hat{h}^i (\pi)\right]
\end{align}

Then 
\begin{align}
&\mathbb{E}_{\pi^{-i}\sim\mathbf{x}^{-i}}\left[h^{i}(\pi^i, \mathbf{\pi}^{-i})\right] = \mathbb{E}_{\pi^{-i}\sim\mathbf{x}^{-i}}\left[\hat{h}^{i}(\pi^i, \mathbf{\pi}^{-i})\right] + \mathbb{E}_{\pi^{-i}\sim\mathbf{x}^{-i}}\left[h^{i}(\pi^i, \mathbf{\pi}^{-i}) - \hat{h}^{i}(\pi^i, \mathbf{\pi}^{-i})\right]\\
&\underset{\pi^i }{\max}~\mathbb{E}_{\pi^{-i}\sim\mathbf{x}^{-i}}\left[h^{i}(\pi^i, \mathbf{\pi}^{-i})\right] \le \underset{\pi^i }{\max}~\mathbb{E}_{\pi^{-i}\sim\mathbf{x}^{-i}}\left[\hat{h}^{i}(\pi^i, \mathbf{\pi}^{-i})\right] +\underset{\pi^i }{\max}~\mathbb{E}_{\pi^{-i}\sim\mathbf{x}^{-i}}\left[h^{i}(\pi^i, \mathbf{\pi}^{-i}) - \hat{h}^{i}(\pi^i, \mathbf{\pi}^{-i})\right]
\end{align}

Hence, 
\begin{align}\label{eq:Equilibrium_Approximation_2}
&\underset{\pi^i }{\max}~\mathbb{E}_{\pi^{-i}\sim\mathbf{x}^{-i}}\left[h^{i}(\pi^i, \mathbf{\pi}^{-i})\right] - \mathbb{E}_{\pi\sim\mathbf{x}}\left[h^i (\pi)\right]\\
\le & \underbrace{\underset{\pi^i }{\max}\;\mathbb{E}_{\pi^{-i}\sim\mathbf{x}^{-i}}\left[\hat{h}^{i}(\pi^i, \mathbf{\pi}^{-i})\right] - \mathbb{E}_{\pi\sim\mathbf{x}}\left[\hat{h}^i (\pi)\right]}_{=0 \text{ since } \textbf{x} \text{ is a Nash Equilibrium for }\hat{h}^{i}} + \underbrace{\underset{\pi^i }{\max}\;\mathbb{E}_{\pi^{-i}\sim\mathbf{x}^{-i}}\left[h^{i}(\pi^i, \mathbf{\pi}^{-i}) - \hat{h}^{i}(\pi^i, \mathbf{\pi}^{-i})\right]}_{\le \epsilon} + \underbrace{\mathbb{E}_{\pi\sim\mathbf{x}}\left[\hat{h}^{i}(\pi) - h^{i}(\pi)\right]}_{\le\epsilon}
\end{align}

Hence, if we can control the difference between $|h^i (\pi)-\hat{h}^{i}(\pi)|$ uniformly over players and actions, then an equilibrium for the empirical game is almost an equilibrium for the game defined by the reward function. Hence the question is how many samples $n$ do we need to assess that a Nash equilibrium for $\hat{h}$ is a $2\epsilon$-Nash equilibrium for $h$ for a fixed confidence $\delta$ and a fixed $\epsilon$.

In the following, in short, we fix player $i$ and the joint strategy $\pi = (\pi^1,..., \pi^p)$ for $p$ players and and in short, denote $h^i = h^i(\pi)$,  $\hat{h}^i = \hat{h}^i(\pi)$.
By Hoeffding inequality, 
\begin{align}\label{eq:bound_hoeffding}
    \mathbb{P}\left[\left|\bar{R^i} -\mathbb{E}[R^i] \right|\geq \frac{\epsilon}{2}\right]\leq 2\cdot exp\left(-\frac{\epsilon^2 n}{8R^2}\right)
\end{align}

Now, it remains to give a batch scenario for the unbiased estimator of variance penalty term. Denote 
$V^2_n = \frac{1}{n-1}\sum_{j=1}^{n}\left(R^i_j - \bar{R^i}\right)^2$, then $\mathbb{E}[V^2_n] = \mathbb{V}ar[R^i] = \sigma^2$, i.e., it's an unbiased estimator of the game variance. We first compute the variance of $V^2_n$.

Let $Z^i_j = R^i_j - \expectation[R^i]$, then $\expectation[Z^i] = 0$ and $Z^i_1, ...Z^i_n$ are independent. Then we have
\begin{align}
    V^2_n = \mathbb{V}ar[R^i] = \mathbb{V}ar[Z^i].
\end{align} 
\begin{align}\label{eq:variance_of_samplevariance}
    &\mathbb{V}ar[V^2_n] = \mathbb{E}[V^4_n] - (\mathbb{E}[V^2_n])^2\\
    &= \expectation\left[\frac{n^2(\sum_{j=1}^{n}(Z_j^i)^2)^2 - 2n(\sum_{j=1}^{n}(Z_j^i)^2) (\sum_{j=1}^{n}Z_j^i)^2 + (\sum_{j=1}^{n}Z_j^i)^4}{n^2(n-1)^2}\right] - \sigma^4\\
    &= \frac{n^2\expectation\left[\left(\sum_{j=1}^n(Z_j^i)^2\right)^2\right] - 2n\expectation\left[\left(\sum_{j=1}^n (Z_j^i)^2\right)\left(\sum_{j=1}^n Z_j^i\right)^2\right] + \expectation\left[\left(\sum_{j=1}^n Z_j^i\right)^4\right]}{n^2(n-1)^2} - \sigma^4
\end{align}
Since $Z_1^i, ..., Z_n^i$ are independent, then we have that for distinct $j,k,m$, \begin{align}
    \expectation[Z^i_j Z^i_k] = 0; \quad \expectation[(Z^i_j)^3 Z^i_k] = 0;\quad \expectation[(Z^i_j)^2 Z^i_k Z^i_m] = 0.
\end{align} 
And we denote\begin{align}
    \expectation[(Z^i_j)^2 (Z^i_k)^2] = \mu_2^2 = \sigma^4;\quad \expectation[(Z^i_j)^4] =\mu_4.
\end{align}
Then, with algebraic manipulations, we can simplify \cref{eq:variance_of_samplevariance} as:\begin{align}
    \mathbb{V}ar[V_n^2] &= \frac{n^2\left(n\mu_4 + n(n-1)\mu_2^2\right) - 2n(n\mu_4 + n(n-1)\mu_2^2) + n\mu_4 + 3n(n-1)\mu_2^2}{n^2(n-1)^2} - \sigma^4\\
    &= \frac{(n-1)\mu_4 +(n^2-2n+3)\sigma^4}{n(n-1)} - \sigma^4\\
    &= \frac{\mu_4}{n} - \frac{\sigma^4 (n-3)}{n(n-1)}.
\end{align}
By Chebyshev's inequality, \begin{align}
    \mathbb{P}\left[\left|V_n^2 - \mathbb{V}ar[R^i]\right|\geq \frac{\epsilon}{2\beta}\right]&\leq \frac{\mathbb{V}ar[V_n^2]}{(\frac{\epsilon}{2\beta})^2}\\
    &\leq \frac{ 4\beta^2\left(\frac{\mu_4}{n} - \frac{\sigma^4 (n-3)}{n(n-1)}\right)}{\epsilon^2}
\end{align}
By \cref{ass:stochastic_reward_bounded}, \begin{align}
    \mu_4\leq 16\omega^2\cdot\Gamma(2)
\end{align}

By triangle inequality, \begin{align}
    \sP\left[\left|h^i - \hat{h}^i\right|\geq \epsilon\right]&\le \sP\left[\left|\expectation[R^i]-\bar{R}^i\right|+\beta\cdot\left|V_n^2 - \mathbb{V}ar[R^i]\right|\geq \epsilon\right]\\
    &\le \sP\left[\left|\expectation[R^i]-\bar{R}^i\right|\ge \frac{\epsilon}{2}\;or\;\beta\cdot\left|V_n^2 - \mathbb{V}ar[R^i]\right|\geq \frac{\epsilon}{2}\right]\\
    &\le \sP\left[\left|\expectation[R^i]-\bar{R}^i\right|\ge \frac{\epsilon}{2}\right] + \sP\left[\left|V_n^2 - \mathbb{V}ar[R^i]\right|\geq \frac{\epsilon}{2\beta}\right]\\
    &\le 2\cdot exp\left(-\frac{\epsilon^2 n}{8R^2}\right) + \frac{ 4\beta^2\left(\frac{16\omega^2\cdot\Gamma(2)}{n} - \frac{\sigma^4 (n-3)}{n(n-1)}\right)}{\epsilon^2}\\
    &\le 2\cdot exp\left(-\frac{\epsilon^2 n}{8R^2}\right) + \frac{64\beta^2\omega^2\cdot\Gamma(2)}{n\epsilon^2}\\
    &= f(n, \epsilon).
\end{align}
Hence, for per joint strategies $\pi$ and per player $i$, we have the following bound : \begin{align}
    \sP\left[\underset{\pi,i}{\sup}\left|h^i(\pi) - \hat{h}^i (\pi)\right|<\epsilon\right]\geq \left(1-f(n,\epsilon)\right)^{|S^1|\times ...\times |S^p|\times p}
\end{align}

Hence, for \begin{align}
    n\ge \max\left\{-\frac{8R^2}{\epsilon^2}\log\left[\frac{1}{4}\left(1-(1-\delta)^{\frac{1}{|S^1|\times ...\times |S^p|\times p}}\right)\right]\;; \;\frac{64\beta^2\omega^2\cdot\Gamma(2)}{\epsilon^2\left[1-(1-\delta)^{\frac{1}{|S^1|\times...\times |S^p|\times p}}\right]}\right\}
\end{align}
we have $\sP\left[\underset{\pi,i}{\sup}\left|h^i(\pi) - \hat{h}^i (\pi)\right|<\epsilon\right]\ge 1-\delta$.

Plugging the result into \cref{eq:Equilibrium_Approximation_2}, we have \begin{align}\label{eq:Equilibrium_Approximation_3}
    &\underset{\pi^i }{\max}~\mathbb{E}_{\pi^{-i}\sim\mathbf{x}^{-i}}\left[h^{i}(\pi^i, \mathbf{\pi}^{-i})\right] - \mathbb{E}_{\pi\sim\mathbf{x}}\left[h^i (\pi)\right]\le 2\epsilon
\end{align}
\end{proof}

\end{document}